\PassOptionsToPackage{unicode}{hyperref}
\PassOptionsToPackage{hyphens}{url}
\PassOptionsToPackage{dvipsnames,svgnames,x11names}{xcolor}
\documentclass[
  12pt]{article}

\usepackage{amsmath,amssymb}
\usepackage{iftex}
\ifPDFTeX
  \usepackage[T1]{fontenc}
  \usepackage[utf8]{inputenc}
  \usepackage{textcomp} 
\else 
  \usepackage{unicode-math}
  \defaultfontfeatures{Scale=MatchLowercase}
  \defaultfontfeatures[\rmfamily]{Ligatures=TeX,Scale=1}
\fi
\usepackage{lmodern}
\ifPDFTeX\else  
\fi
\IfFileExists{upquote.sty}{\usepackage{upquote}}{}
\IfFileExists{microtype.sty}{
  \usepackage[]{microtype}
  \UseMicrotypeSet[protrusion]{basicmath} 
}{}
\makeatletter
\@ifundefined{KOMAClassName}{
  \IfFileExists{parskip.sty}{%
    \usepackage{parskip}
  }{
    \setlength{\parindent}{0pt}
    \setlength{\parskip}{6pt plus 2pt minus 1pt}}
}{
  \KOMAoptions{parskip=half}}
\makeatother
\usepackage{xcolor}
\makeatletter
\ifx\paragraph\undefined\else
  \let\oldparagraph\paragraph
  \renewcommand{\paragraph}{
    \@ifstar
      \xxxParagraphStar
      \xxxParagraphNoStar
  }
  \newcommand{\xxxParagraphStar}[1]{\oldparagraph*{#1}\mbox{}}
  \newcommand{\xxxParagraphNoStar}[1]{\oldparagraph{#1}\mbox{}}
\fi
\ifx\subparagraph\undefined\else
  \let\oldsubparagraph\subparagraph
  \renewcommand{\subparagraph}{
    \@ifstar
      \xxxSubParagraphStar
      \xxxSubParagraphNoStar
  }
  \newcommand{\xxxSubParagraphStar}[1]{\oldsubparagraph*{#1}\mbox{}}
  \newcommand{\xxxSubParagraphNoStar}[1]{\oldsubparagraph{#1}\mbox{}}
\fi
\makeatother

\usepackage{longtable,booktabs,array}
\usepackage{calc} 
\usepackage{etoolbox}
\makeatletter
\patchcmd\longtable{\par}{\if@noskipsec\mbox{}\fi\par}{}{}
\makeatother
\IfFileExists{footnotehyper.sty}{\usepackage{footnotehyper}}{\usepackage{footnote}}
\makesavenoteenv{longtable}
\usepackage{graphicx}
\makeatletter
\def\maxwidth{\ifdim\Gin@nat@width>\linewidth\linewidth\else\Gin@nat@width\fi}
\def\maxheight{\ifdim\Gin@nat@height>\textheight\textheight\else\Gin@nat@height\fi}
\makeatother
\setkeys{Gin}{width=\maxwidth,height=\maxheight,keepaspectratio}
\makeatletter
\def\fps@figure{htbp}
\makeatother

\makeatletter
\@ifpackageloaded{caption}{}{\usepackage{caption}}
\AtBeginDocument{%
\ifdefined\contentsname
  \renewcommand*\contentsname{Table of contents}
\else
  \newcommand\contentsname{Table of contents}
\fi
\ifdefined\listfigurename
  \renewcommand*\listfigurename{List of Figures}
\else
  \newcommand\listfigurename{List of Figures}
\fi
\ifdefined\listtablename
  \renewcommand*\listtablename{List of Tables}
\else
  \newcommand\listtablename{List of Tables}
\fi
\ifdefined\figurename
  \renewcommand*\figurename{Figure}
\else
  \newcommand\figurename{Figure}
\fi
\ifdefined\tablename
  \renewcommand*\tablename{Table}
\else
  \newcommand\tablename{Table}
\fi
}
\@ifpackageloaded{float}{}{\usepackage{float}}
\floatstyle{ruled}
\@ifundefined{c@chapter}{\newfloat{codelisting}{h}{lop}}{\newfloat{codelisting}{h}{lop}[chapter]}
\floatname{codelisting}{Listing}

\makeatother
\makeatletter
\@ifpackageloaded{caption}{}{\usepackage{caption}}
\@ifpackageloaded{subcaption}{}{\usepackage{subcaption}}
\makeatother

\ifLuaTeX
  \usepackage{selnolig}  
\fi
\usepackage[]{natbib}
\usepackage{bookmark}

\IfFileExists{xurl.sty}{\usepackage{xurl}}{} 
\hypersetup{
  pdftitle={Title},
  pdfauthor={Author 1; Author 2},
  pdfkeywords={3 to 6 keywords, that do not appear in the title},
  colorlinks=true,
  linkcolor={blue},
  filecolor={Maroon},
  citecolor={Blue},
  urlcolor={Blue},
  pdfcreator={LaTeX via pandoc}}

\usepackage{amsthm}
\usepackage{authblk}
\usepackage{comment}
\usepackage{algorithm}
\usepackage{algpseudocode}
\usepackage{setspace}

\newtheorem{lemma}{Lemma}
\newtheorem{assumption}{Assumption}
\newtheorem{proposition}{Proposition}

\DeclareMathOperator{\diff}{Diff}

\newcommand{\anon}{1}

\begin{document}

\def\spacingset#1{\renewcommand{\baselinestretch}%
{#1}\small\normalsize} \spacingset{1}


\if1\anon
{
  \title{\bf Predicting Covariate-Driven Spatial Deformation for Nonstationary Gaussian Processes}
  \author[1]{Minghao Gu}
  \author[2]{Weizhi Lin}
  \author[1]{Qiang Huang}
  \affil[1]{Daniel J. Epstein Department of Industrial \& Systems Engineering, Univeristy of Southern California}
\affil[2]{Charles W. Davidson College of Engineering, San Jos\'e State University}
  \maketitle
} \fi

\if0\anon
{
  \bigskip
  \bigskip
  \bigskip
  \begin{center}
    {\LARGE\bf Title}
\end{center}
  \medskip
} \fi

\bigskip
\begin{abstract}

Nonstationary Gaussian processes (GPs) are essential for modeling complex, locally heterogeneous spatial data. 
A common modeling approach is the spatial deformation method that warps the domain to recover isotropy. 
However, this static method does not account for changes in spatial correlation induced by covariates, limiting its ability to predict nonstationary GPs under new covariate conditions. 

To enable predictive modeling of the deformation method, we propose to model the spatial deformation as a function of covariates. 
The spaces of diffeomorphic deformations and Euclidean covariate vectors are connected by characterizing deformations as generated by velocity fields living in a Lie algebra. 
To overcome the estimation instability caused by high-order interactions between multiple covariates in a general Lie algebra, we prove that those interactions can be truncated with a moderate physical assumption. 
Based on the theoretical results, a concise functional form of deformations driven by multiple covariates can be established, and an efficient estimation-inference algorithm is developed for out-of-sample nonstationary GP prediction with limited covariate-deformation sample pairs. 
The effectiveness and generalizability of the method are demonstrated on a simulation study and two case studies, in the fields of manufacturing and geostatistics, respectively. 
\end{abstract}

\noindent%
{\it Keywords:} Deformation method, out-of-sample prediction, multiple covariates, Lie algebra, velocity-field-driven deformation. 
\vfill

\newpage
\spacingset{1.8} 

\section{Introduction}\label{sec_intro}

Gaussian process (GP) models provide a flexible, data-efficient approach for analyzing spatial functional data, with wide-ranging applications across scientific and engineering domains, such as computer experiment, geostatistics, and machine learning \citep{kennedy2001bayesian, stein1999interpolation, seeger2004gaussian}. 
Among the first- and second-order properties fully defining a GP, 
it is standard practice to assume zero-mean and focus on covariance modeling, as deterministic trends can be modeled and subtracted a priori. 
In its classical form, a stationary covariance structure is adopted, encoding assumptions of homogeneity and translational invariance over the spatial domain. 
While such assumptions are commonly used to facilitate tractable modeling and inference, they are often violated in real-world applications, where the stochastic behavior of a process can vary across locations. 

\begin{figure}[!htp]
    \centering
    \includegraphics[width=0.9\linewidth]{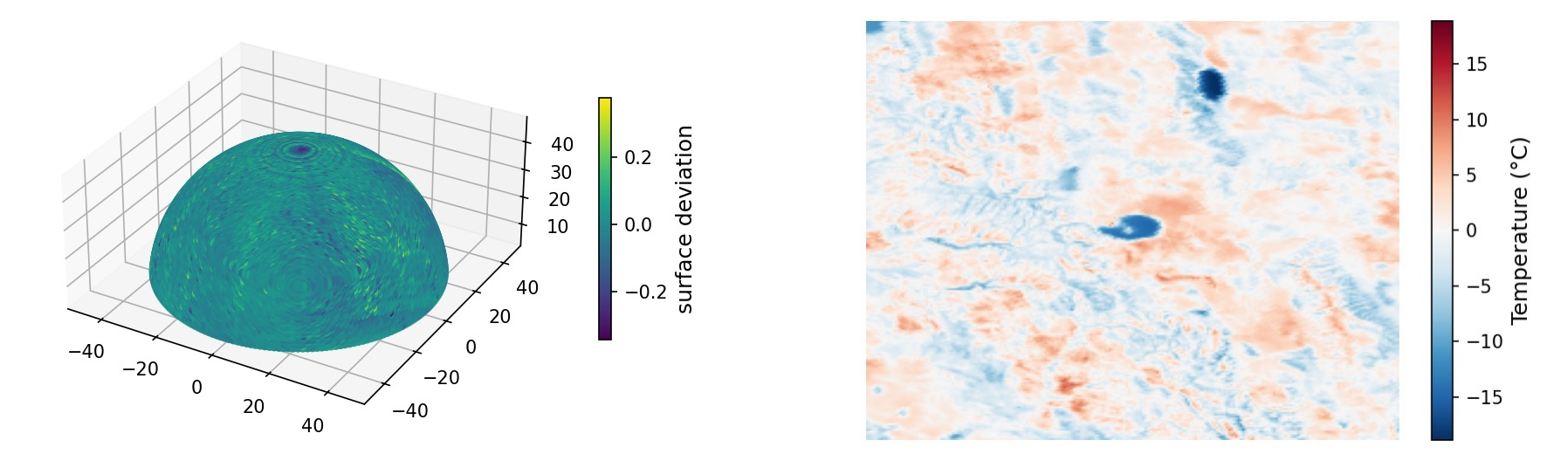}
    \caption{\label{fig_example}Left: surface deviation measured on a dome shape AM part, exhibiting heterogeneous local patterns. Right: the temperature residual over a complex mountainous region, which shows significant nonstationarity.}
\end{figure}


We present two datasets from distinct systems in Fig.~\ref{fig_example}: additive manufacturing (AM) surface quality and geographical terrain temperature. 
The left plot displays the surface deviation measured on a dome-shaped part fabricated by a fused deposition modeling (FDM) process, which exhibits heterogeneous patterns locally. 
For instance, the apex demonstrates a pronounced staircase effect \citep{matos2020improving}, whereas the bottom shows stronger vertical spatial correlation. 
The right plot illustrates the temperature residual, after extracting the deterministic mean \citep{hijmans2005very}, over a complex mountainous region \citep{wan2014new, farr2007shuttle}. 
From the plot, this zero-mean residual process retains significant nonstationarity. 
Furthermore, we can discover that the nonstationarity in both systems is modulated by certain local conditions: \cite{gu2026surface} noted that the heterogeneous patterns of surface deviation are jointly influenced by local surface geometry and process-related factors, while the temperature is clearly related to local topography. 
These experimental or environmental conditions impacting the nonstationarity are considered as covariates in these systems. 
Consequently, classical stationary models are inadequate, necessitating methodologies that can explicitly link heterogeneous spatial covariance to local covariates. 


\begin{figure}[!htp]
    \centering
    \includegraphics[width=0.9\linewidth]{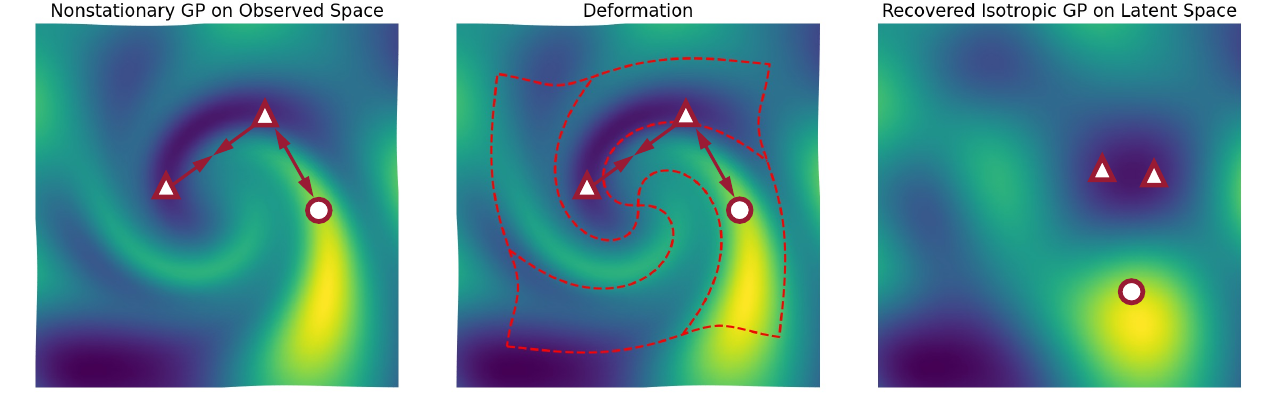}
    \caption{\label{fig_deform_illustration}Illustration of the deformation method idea. For the original observed space (left), strongly correlated locations (triangles) are compressed together, and weakly correlated locations (triangle and circle) are stretched apart. Applying the continuous deformation (middle) to the original space, the GP on the deformed latent space is recovered to isotropy (right).}
\end{figure}

To model such complex spatial nonstationarity, various strategies have been proposed, including nonstationary kernels and piece-wise stationary models \citep{fouedjio2017second, sauer2023non}.
Among these approaches, the spatial deformation method originally introduced by \cite{sampson1992nonparametric} stands out, benefiting from two unique advantages: 
As illustrated by Fig.~\ref{fig_deform_illustration}, the method treats the observed spatial domain (left) as pliable. 
Locations that exhibit unexpectedly strong correlation (triangles) are pulled closer together, while locations with unusually weak correlation (triangle and circle) are pushed apart. 
Through compressing and stretching the original coordinates (middle), the complex spatial correlation is simplified into a deformed isotropic one in the latent space (right) that can be handled with classic tools. 
Compared to other approaches, this method benefits from two unique advantages: 
\begin{itemize}
    \item Geometric interpretation: The deformation mapping ("compressing and stretching") provides a highly intuitive geometric interpretation of the system's underlying heterogeneity. 
    \item Guaranteed validity: Unlike methods that require complex regularization to maintain mathematically valid, the deformation method inherently guarantees the positive-definiteness of the resulting covariance structure. 
\end{itemize}
This deformation mapping ("compressing and stretching") is typically represented via basis expansion \citep{sampson1992nonparametric, perrin1998modeling} or Beltrami coefficients \citep{anderes2008estimating}, encoding the invertible mapping between the observed and latent domain. 

However, while the deformation method provides an advantageous foundation for modeling nonstationarity, the existing formulations were primarily designed to solve interpolation problems under sparse observation settings. 
Spatial interpolation, or kriging, is an in-sample task that estimates values at unobserved locations within a fixed spatial domain under static conditions. 
This interpolation is typically most meaningful when data are acquired at low spatial resolution.

However, the development of high-resolution data acquisition technologies nowadays, such as optical surface scanning and advanced remote sensing, has fundamentally shifted the engineering bottleneck. 
With denser observations increasingly accessible, the contemporary challenge is no longer interpolating fixed domains under static conditions, but predicting across unknown ones. 
Returning to our examples in Fig.~\ref{fig_example}, the potential objective is no longer interpolating the scanned deviations of an existing AM dome, but rather predicting the nonstationary surface quality of a newly designed component prior to fabrication, or similarly, forecasting the temperature field of another unmonitored mountain. 
To establish such out-of-sample prediction capability with the deformation method, a model must be able to extrapolate how a new spatial domain will deform. 
Because the underlying system's nonstationarity is often governed by local covariates, as demonstrated in our examples, these new, extrapolative spatial deformations must naturally be modeled as functions of those local covariates. 

In contrast to this practical need, a research gap persists: there is a lack of methodology for modeling and predicting these covariate-driven deformations. 
Developing such a method poses two unique challenges. 
First, deformations and covariates exist in spaces with fundamentally distinct topologies: A valid spatial deformation is a highly structured mapping, usually a diffeomorphism, residing in a nonlinear, curved space, whereas covariates typically reside in standard Euclidean space. 
Establishing a mapping between these distinct topologies is inherently difficult, since standard Euclidean regression cannot be applied. 
Second, the number of observations under different covariate conditions available for training is typically limited due to the time consumption and cost of experimentation. 
Consequently, if we adopt heavily parameterized or black-box mapping approaches for capturing the complex mapping between spatial deformations and covariates, they lack the structural constraints needed for reliable extrapolation and are prone to severe overfitting. 

To enable out-of-sample prediction of nonstationary GP, we propose a novel covariate-driven deformation modeling method. 
We first characterize the spatial deformation via the Lie algebra.
By projecting the complex space of valid spatial deformations to its Lie algebra, i.e., a vector space of velocity fields, it provides the necessary tool to explicitly link Euclidean covariates to spatial deformations, resolving the topological mismatch. 
To further address the highly complex computational rules between multiple covariate channels, we formulate the covariate-driven velocity field based on principles of covariate-channel-wise commutativity. 
This commutativity not only bypasses the computationally intensive Lie algebra integrations but also enforces a concise model that remains effective and robust under limited sample sizes. 
We present a series of experimental studies to verify the out-of-sample prediction capability of the covariate-driven model as well as its broad generalizability across diverse physical domains. 

The remainder of the paper is organized as follows. 
Sec.~\ref{sec_review} gives a thorough literature review on existing approaches to modeling nonstationary GPs, revealing the research gap in predicitve modeling with the deformation method. 
Sec.~\ref{sec_method} proposes our covariate-driven deformation modeling method. 
A simulation study and two case studies are presented in Sec.~\ref{sec_experiment} to verify the effectiveness of our method. 
Sec.~\ref{sec_conclusion} summarizes our work and gives potential future research directions. 

\section{Literature Review: Nonstationary GPs} \label{sec_review} 


 
To model nonstationary GPs, multiple strategies have been proposed, falling into three main categories \citep{sauer2023non, fouedjio2017second}: nonstationary kernels, divide-and-conquer, and spatial deformation. 
In this section, we review these categories, with an emphasis on the deformation method and covariate-driven subsets \citep{risser2016nonstationary}. 

The most direct approach for modeling nonstationarity is to encode it analytically into the covariance kernel. 
Let $\Sigma(\cdot)$ denote a stationary GP kernel; a nonstationary extension replaces the distance-based form with a location-dependent kernel $\Sigma_{\mathrm{ns}}(\boldsymbol{x}_i,\boldsymbol{x}_j)$ that depends explicitly on locations $\boldsymbol{x}_i$ and $\boldsymbol{x}_j$. 
A general construction is given by process convolutions $\Sigma_{\text{ns}}(\boldsymbol{x}_i, \boldsymbol{x}_j) = \int \kappa_{\boldsymbol{x}_i}(u) \kappa_{\boldsymbol{x}_j}(u) du$, where the local kernels $\kappa_{\boldsymbol{x}}$ vary with spatial location, thereby inducing spatially adaptive dependence \citep{higdon1999non, paciorek2003nonstationary, nychka2002multiresolution}. 
Another more explicit route is to parameterize kernel hyperparameters, e.g., lengthscale and variance, as functions of location covariates \citep{heinonen2016non, binois2018practical}. 
This branch is particularly relevant for covariate-dependent modeling \citep{reich2011class, neto2014accounting}, as further integrating covariates into the hyperparameters is straightforward in form. 
These models provide high flexibility and interpretability by directly linking spatial variation to physical conditions, but they come with multiple mathematical hurdles. 
As local kernel parameters must be estimated across spatial and covariate domains, these models are heavily parameterized. 
While approaches like SPDE-based representations \citep{lindgren2011explicit, ingebrigtsen2014spatial} and multi-resolution approximations \citep{nychka2002multiresolution, katzfuss2017multi} mitigate computational burden via sparsity, they do not resolve two underlying issues. 
Under limited sample sizes, estimating these vast parameters becomes an ill-posed problem. 
Furthermore, the attempt to force nonstationarity by manipulating the algebraic structure of the covariance function makes it numerically vulnerable.
Injecting covariates exacerbates this problem, as abrupt fluctuations of covariates will break the smooth distance metric \citep{bornn2012modeling}, on which a valid spatial covariance fundamentally relies \citep{fuglstad2015does}. 
Consequently, ensuring that the resulting covariance matrix remains positive-definite requires restrictive regularization \citep{paciorek2006spatial, fuglstad2015does}.

Conversely, divide-and-conquer methods \citep{rasmussen2001infinite, kim2005analyzing} assume piecewise stationarity and partition the spatial domain into smaller, tractable sub-regions, thereby bypassing the problem of regularization. 
One milestone work is the treed GP proposed in \cite{gramacy2008bayesian}, which utilizes Bayesian regression trees to generate spatial partitions, allowing efficient recursive cuts of high-dimensional domains. 
\cite{gramacy2015local} further advanced the method for large-scale computer experiments via local approximations. 
Notably, these divide-and-conquer models can use covariates as partitioning criteria to isolate sub-regions with distinct behavior \citep{konomi2014bayesian}, or as gating functions to determine local Gaussian experts \citep{tresp2000mixtures}. 
These models operate under the assumption of piecewise stationarity, fitting independent, localized stationary surrogates within each discrete sub-region. 
While highly computationally efficient, this means these models inevitably suffer from numerical discontinuity between sub-regions.
Because the local models are estimated independently, the resulting global field lacks the smoothness and coherence required to model continuous phenomena like thermal gradients or manufactured surfaces. 

The search for a nonstationary GP model that avoids heavy algebraic constraints on the covariance structure, remains interpretable, yet accommodates globally continuous variation naturally leads us to the spatial deformation method. 
Originally proposed by \cite{sampson1992nonparametric}, this approach reframes the problem: rather than revising the covariance kernel itself, the deformation method instead focuses on learning a bijective spatial mapping, often a diffeomorphism, from the original observed spatial coordinates to a latent space. 
Therefore, the method transfers the regularization burden from the covariance structure to the spatial deformation, inherently preserving the positive definiteness of the covariance as long as the deformation is bijective. 
This is an advantageous trade, as there are more robust tools for enforcing deformation bijectivity than for stabilizing covariance matrices. 
\cite{perrin1998modeling} uses bijective bases to expand the deformation, while \cite{iovleff2004estimating} meshes the domain and regulates the orientation of the triangular mesh. 
\cite{damian2001bayesian} and \cite{schmidt2003bayesian} estimate the deformation within a Bayesian paradigm, mitigating folding with a Bayesian prior. 
\cite{anderes2008estimating} utilizes the differential geometry formulation to not only regulate the deformation but also address the estimation with a single realization. 
The fundamental philosophy of spatial deformation has later enlightened deep GP \citep{damianou2013deep, dunlop2018deep}, in which neural network (NN) layers act as sequential, data-driven spatial deformations. 
However, despite these profound advancements, the spatial deformation literature remains constrained: Whether utilizing basis expansion or deep NN, existing deformation methods are designed to learn a static deformation tailored to the observed data. 
As a result, while they provide coherent and accurate in-sample modeling, they do not address how the deformation should adapt when system dynamics change with covariates. 
In particular, the model offers little guidance on answering the out-of-sample question: How the latent space will deform under novel covariate conditions. 

Synthesizing the review across these three paradigms for modeling nonstationary GPs reveals a clear research gap: covariate-dependent prediction via nonstationary kernels or discrete partitioning inevitably compromises numerical stability or global continuity, while conversely, the spatial deformation method remains static and decoupled from physical drivers of nonstationarity. 
Therefore, this is a crucial need for a method that bridges the divide. 


\section{Methodology} \label{sec_method}

\subsection{Preliminary, Notation, and Setup on Deformation Method} \label{sec_setup}

Throughout this work, we use plain lowercases (e.g., $x$) for scalars, bold lowercases ($\boldsymbol{x}$) for vectors, plain uppercases ($X$) for functions or stochastic processes, bold uppercases ($\boldsymbol{X}$) for matrices, and caligraphic letters ($\mathcal{X}$) for domains or sets. 

Let $Y(\cdot)$ denote a continuous process observed over a smooth, compact manifold $\mathcal{S}$, with coordinates $\boldsymbol{s} \in \mathcal{S}$. 
Without loss of generality, we model $Y(\cdot)$ as a mean-zero GP, with its covariance function denoted as $\Sigma(\cdot, \cdot)$. 
The covariance does violate second-order stationarity, i.e., $\Sigma(\boldsymbol{s}_i, \boldsymbol{s}_j) \neq \Sigma \left( \Vert \boldsymbol{s}_i - \boldsymbol{s}_j \Vert \right)$. 

In the deformation method \citep{sampson1992nonparametric}, the domain $\mathcal{S}$ of the nonstationary process $Y(\cdot)$ goes through a spatial deformation $f: \mathcal{S} \rightarrow \mathcal{W}$ and is deformed into a new, latent domain $\mathcal{W}$. 
For simplicity, we consider the common case that the latent domain $\mathcal{W}$ has the same topology (i.e., homeomorphic) as the observed domain $\mathcal{S}$. 
In this latent domain $\mathcal{W}$, a new spatial process $Z \left( f(\cdot) \right)$ is therefore constructed, with spatial coordinates $\boldsymbol{w} = f(\boldsymbol{s}) \in \mathcal{W}$.
This new process $Z(\cdot)$ defined on the latent domain $\mathcal{W}$ is strictly isotropic and called the base process. 
The explicit deformation relationship between the two processes can be written as: 
$$
\begin{aligned}
    Z(\boldsymbol{w}) = Z \left( f(\boldsymbol{s}) \right) = Y(\boldsymbol{s}), \\
    Y = Z \circ f, \quad Z = Y \circ f^{-1}. 
\end{aligned}
$$

Correspondingly, the nonstationary covariance on the original domain $\mathcal{S}$ can be computed as: 
$$
\Sigma(\boldsymbol{s}_i, \boldsymbol{s}_j) = C_{\boldsymbol{\theta}} \left( \Vert f(\boldsymbol{s}_i) - f(\boldsymbol{s}_j) \Vert \right) = C_{\boldsymbol{\theta}} \left( \Vert \boldsymbol{w}_i - \boldsymbol{w}_j \Vert \right),
$$
where $C_{\boldsymbol{\theta}}$ is a standard, isotropic covariance function (such as the Mat\'ern family) of the base process, governed by hyperparameters $\boldsymbol{\theta}$. 

It is obvious that the positive definiteness property of $C_{\boldsymbol{\theta}}(\cdot)$ will be kept in $\Sigma(\cdot, \cdot)$ as long as the deformation mapping $f(\cdot)$ is smooth and bijective \citep{perrin1999identifiability}. 
Therefore, the following constraints are commonly imposed on the deformation mapping as well as the base process \citep{perrin1998modeling, anderes2008estimating}: 
\begin{enumerate}
    \item $f$ is at least a $\mathcal{C}^1$-diffeomorphism onto its image. 
    \item $f$ is orientation-preserving. 
    \item $Z$ is isotropic and unit-ranged. 
\end{enumerate}
While the first constraint is for validity, the second and third constraints are for identifiability. 
$f$ being orientation-preserving eliminates the possibility of mirroring in the deformation. 
We define an isotropic GP with covariance function $\Sigma_{\text{iso}}(\cdot)$ to be unit-ranged if it satisfies $\frac{\Sigma_{\text{iso}}(1)}{\Sigma_{\text{iso}}(0)} = e^{-1}$, thereby without scaling ambiguity. 
After ruling out mirroring and scaling ambiguity, the deformation and the base process are now uniquely defined and identifiable from a single realization of $Y(\cdot)$ \citep{anderes2008estimating, fouedjio2015estimation}. 

As established in the literature review Sec.~\ref{sec_review}, the critical barrier is to utilize this deformation method formulation for out-of-sample prediction. 
While the classical deformations $f(\boldsymbol{s})$ depend exclusively on spatial coordinates $\boldsymbol{s}$, the prerequisite for achieving out-of-sample GP predictions under novel conditions is to predict the deformation as a function of the covariates as well, yielding $f(\boldsymbol{s}, \boldsymbol{\tau})$, where $\boldsymbol{\tau}$ denotes covariates. 

We consider a dataset $\mathcal{Y} = \{ Y_1, \cdots, Y_N \}$, where each $Y_k(\cdot)$ represents a nonstationary GP observed over the shared reference manifold $\mathcal{S}$. 
This is without loss of generality, because the standardized reference $\mathcal{S}$ can be found and anchored across disparate experimental fields, and this will be explained in detail in Sec~\ref{sec_experiment}. 
Each observation $k$ is associated with a specific, known covariate vector $\boldsymbol{\tau}_k(\cdot) \in \mathcal{T} \subset \mathbb{R}^p$, representing $p$ experiment/environment-related conditions, which may vary across the spatial domain $\mathcal{S}$ (for simplicity, we write $\boldsymbol{\tau}_k$ without the location indexing in what follows). 
We treat the input vector $\boldsymbol{\tau}_k$ as a set of distinct influencing channels. 
That is, the effect of each $\tau_k^m, m = 1, \cdots, p$ is independent. 
This isn't simplifying the problem too much, as interactions between covariates, such as coupling, can be addressed a priori via standard feature engineering procedures. 
To make the dataset tractable via deformation-method-based prediction, we impose the following assumption on the data generation mechanism: 
\begin{assumption}[Shared Base Process] \label{assumption_data_generation}
    For the $N$ observed nonstationary GPs in set $\mathcal{Y} = \{ Y_k \}$ all defined on $\mathcal{S}$, there exists a shared, isotropic base process $Z(\cdot)$ defined on a latent space $\mathcal{W}$ such that observed processes can be represented as covariate-driven deformations of this base process. Specifically, each $Y_k$ can be represented as: 
    $$
    Y_k  = Z \circ f_k,
    $$
    where the spatial deformation $f_k(\cdot) = f(\cdot, \boldsymbol{\tau}_k)$ is covariate-driven. 
\end{assumption}

The validity of this assumption is rigorously tested in \cite{gu2025identification} with actual manufacturing examples. 
The focus of the subsequent sub-sections is, based on this setup, to establish such a functional relationship $f \left( \cdot, \boldsymbol{\tau}_k \right)$. 

\subsection{Velocity Field Characterization of Spatial Deformations via Lie Algebra} 

Before we explicitly give the functional relationship $f \left( \cdot, \boldsymbol{\tau}_k \right)$, the fundamental challenge of topological mismatch mentioned in Sec.~\ref{sec_intro} needs to be resolved. 
Specifically, covariates $\boldsymbol{\tau}_k \in \mathcal{T} \subset \mathbb{R}^p$ exist in standard Euclidean space. 
In contrast, the valid spatial deformation $f_k$, regulated by the constraints proposed in Sec.~\ref{sec_setup}, belongs to a space of diffeomorphisms. 

We first formalize the algebraic space in which these deformations operate. 
To do this, we select one of the observed processes and its associated deformation as our baseline, anchoring the deformation space. 
Without loss of generality, we select process $Y_0(\cdot) = Y_1(\cdot)$ and associated deformation $f_0(\cdot) = f_1(\cdot) = f ( \cdot, \boldsymbol{\tau}_1 )$ as the anchoring baseline. 
From this baseline, we define 
$$
\begin{aligned}
    h_k(\boldsymbol{w}) & = h \left( \boldsymbol{w}, \Delta \boldsymbol{\tau}_k \right) = h \left( \boldsymbol{w}, \boldsymbol{\tau}_k - \boldsymbol{\tau}_0 \right) \\
    & = f \left( f_0^{-1}(\boldsymbol{w}), \boldsymbol{\tau}_k \right),
\end{aligned}
$$
where $h_{\Delta \boldsymbol{\tau}_k}: \mathcal{W} \rightarrow \mathcal{W}$ is the relative deformation shifted from the baseline $f_0$, induced by shifting the covariates $\Delta \boldsymbol{\tau}_k = \boldsymbol{\tau}_k - \boldsymbol{\tau}_0$ from the baseline state. 

As a result, these valid relative deformations are diffeomorphisms of $\mathcal{W}$ to itself, therefore forming an algebraic group denoted by $\diff(\mathcal{W})$. 
This group is closed under the composition ("$\circ$") operation, i.e., $h_k \circ h_t \in \diff(\mathcal{W})$. 
This closure guarantees that all covariate-induced relative deformations can be safely applied to the base process to yield a valid diffeomorphism, formulated as: 
$$
    f_k = h_k \circ f_0.
$$

As $\mathcal{W}$ is set to be a smooth, compact manifold (since it is diffeomorphic to $\mathcal{S}$), the group $\diff(\mathcal{W})$ is further an infinite-dimensional Fr\'echet Lie group \citep{hamilton1979inverse}. 
This points a direction: the deformations in the Lie group can be projected to the tangent space at the identity, namely the Lie algebra $\mathfrak{g}$. 
Specifically, each valid relative deformation $h_k \in \diff(\mathcal{W})$ can be projected to a vector field $V_k \in \mathfrak{g}$ through a logarithm mapping, and through an exponential mapping vice versa: $h_k(\boldsymbol{w}) = (\exp V_k)(\boldsymbol{w})$. 

Physically, $V_k$ can be interpreted as a velocity field (This terminology is commonly used in dynamical systems \citep{brin2002introduction, johnson2016handbook}, and we will borrow it in what follows.) guiding a hypothetical particle, and integrating the vectors gives the particle's trajectory. 
The set of trajectories of all particles over the domain gives the spatial deformation governed by the velocity field. 
Notably, $V_k$ allows additions and scalar multiplications, and therefore, there is no longer a topological mismatch between $\mathfrak{g}$ and the covariate space $\mathcal{T}$. 
Similar characterization of deformations is also utilized in computational anatomy (e.g., large deformation diffeomorphic metric mapping \citep{beg2005computing}). 

\subsection{Covariate-Driven Modeling of Spatial Deformations} 

Based on the velocity field characterization provided above, if each covariate channel $\tau^m_k, m = 1, \cdots, p$ induces a corresponding velocity field component $V_k^m \in \mathfrak{g}$ (channel independence is assumed in Sec.~\ref{sec_setup}), the specific velocity field induced by a covariate shift $\Delta \tau_k^m$ can be formulated clearly as a linear scaling $\Delta \tau_k^m V_k^m$. 
This is natural, since $\mathfrak{g}$, as a linear vector space, accommodates linear combination and scaling. 
The resulting spatial deformation driven by this $m$-th isolated channel is exactly:
\begin{equation}\label{eq_linear}
    h_k^m = \exp (\Delta \tau_k^m V_m). 
\end{equation}

In physical experiments, the effects of independent covariate channels act upon the system cumulatively, meaning their resulting spatial deformations should be generated by the exponential map of the linear sum of their corresponding velocity fields: 
\begin{equation} \label{eq_composition}
    h_k = \exp \left( \sum_{m = 1}^p \Delta \tau_k^m V_m \right), 
\end{equation}
where the velocity fields $V_m$ are the core objectives to be estimated. 

However, this formulation presents a severe estimation bottleneck due to entanglement between covariate channels. 
Because in general Lie algebra, arbitrary velocity fields do not commute, i.e., $[V_m, V_n] \neq 0$, where $[\cdot, \cdot]$ denotes the Lie bracket, the Baker-Campbell-Hausdorff (BCH) formula indicates that the total deformation cannot be decoupled.
\begin{equation}\label{eq_bch}
    \exp (X + Y)
    = \exp(X) \circ \exp(Y) \circ \exp \left( -\frac{1}{2}[X, Y] \right) \circ \exp \left( \frac{1}{3} [Y, [X, Y]] + \frac{1}{6} [X, [X, Y]] \right) \circ \cdots
\end{equation}

From an optimization perspective, if the total deformation is fully coupled, a gradient update to the parameters of one velocity field $V_m$ propagates nonlinearly through the nested integrations, inducing chaotic, cascading interaction effects on the final spatial mapping. 
This extreme parameter sensitivity creates a highly ill-conditioned optimization landscape, rendering gradient-based estimation highly unstable.

To salvage this formulation and recover a mathematically tractable multi-covariate model, we impose the following assumption. 

\begin{assumption}[DoE Path Independence of Covariates] \label{assumption_path_independence}
    In a multi-covariate experimental environment, the final state of the system is invariant to the sequence in which independent covariate interventions are applied. Specifically, in spatial deformation driven by covariates, let $h_k^m$ and $h_k^n$ denote the relative spatial deformations induced by shifting isolated covariate channels $m$ and $n$ by $\Delta \tau_k^m$ and $\Delta \tau_k^m$, respectively. These interventional deformations commute under composition: 
    $$
        h_k^m \circ h_k^n = h_k^n \circ h_k^m.
    $$
\end{assumption}

In the established Design of Experiments (DoE) principles \citep{fisher1966design}, when dealing with multiple control variables, the final state of a system is generally assumed to be invariant to the sequence of interventions. 
For example, in an AM process, adjusting the extruder temperature and then the printing speed should ideally yield the same product quality as speed first and temperature second, and there is no carryover effect in between. 
This implicit modeling assumption is widely adopted in experimental studies, and it directly leads to Assumption~\ref{assumption_path_independence} if the deformation induced by an isolated covariate channel is viewed as the effect of a control variable. 

This reasonable assumption of macroscopic imposes a useful geometric constraint on the microscopic velocity fields, illustrated by the following lemma: 
\begin{lemma}[Channel Commutativity of Multiple Covariates]
    \label{lemma_commute}
    Let $V_1 \cdots, V_p$ be velocity fields on $\mathcal{W}$ associated with the covariate channels $\tau^1, \cdots, \tau^p$, respectively. 
    The DoE path independence condition (Assumption~\ref{assumption_path_independence}) holds if and only if these velocity fields commute, i.e., their Lie bracket vanishes: 
    $$
        [V_m, V_n] = 0, ~ \forall m, n \in \{ 1, \cdots, p \}, ~ m \neq n. 
    $$
\end{lemma}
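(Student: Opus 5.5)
The proof splits into the two directions. We read Assumption~\ref{assumption_path_independence} as holding for all covariate shifts, i.e.\ for all $s,t$ in a neighbourhood of $0$ we require
$$
\exp(sV_m)\circ\exp(tV_n)=\exp(tV_n)\circ\exp(sV_m).
$$
This is the statement that the flows $\phi^m_s=\exp(sV_m)$ and $\phi^n_t=\exp(tV_n)$ commute. Since $\mathcal{W}$ is compact, every smooth vector field is complete, so these flows are global one-parameter subgroups of $\diff(\mathcal{W})$. We therefore do not need the exponential map of the Fr\'echet Lie group to be a local diffeomorphism, which is known to fail for $\diff(\mathcal{W})$. Working directly with flows avoids the BCH formula~\eqref{eq_bch}, whose convergence is not guaranteed in this infinite-dimensional setting.

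For the direction ``path independence $\Rightarrow$ $[V_m,V_n]=0$'', fix $t$ and differentiate the identity $\phi^m_s\circ\phi^n_t=\phi^n_t\circ\phi^m_s$ in $s$ at $s=0$. This gives $V_m\circ\phi^n_t = d\phi^n_t\,(V_m)$, i.e.\ $(\phi^n_t)^*V_m=V_m$ for all $t$. Differentiating this in $t$ at $t=0$ gives $\mathcal{L}_{V_n}V_m=0$, since the Lie derivative is the $t$-derivative of the pullback. Because $\mathcal{L}_{V_n}V_m=[V_n,V_m]$, the bracket vanishes.

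For the converse, assume $[V_m,V_n]=0$ and set $W_t=(\phi^n_t)^*V_m$. The standard identity $\frac{d}{dt}(\phi^n_t)^*V_m=(\phi^n_t)^*[V_n,V_m]$ shows that $W_t$ is constant in $t$, so $W_t=V_m$. Hence $\phi^n_t$ maps integral curves of $V_m$ to integral curves of $V_m$. Uniqueness of solutions of ODEs then gives $\phi^n_t\circ\phi^m_s\circ(\phi^n_t)^{-1}=\phi^m_s$, which is the commutation required by Assumption~\ref{assumption_path_independence}. Applying this with $s=\Delta\tau^m_k$ and $t=\Delta\tau^n_k$ yields $h_k^m\circ h_k^n=h_k^n\circ h_k^m$ for every pair $m\neq n$.

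The main obstacle is making precise the step from commutation of the finite maps to vanishing of the bracket. Assumption~\ref{assumption_path_independence} as literally stated only concerns the particular realised shifts $\Delta\tau^m_k$, and commutation of two fixed time-maps does not by itself force the generating fields to commute. I would therefore state explicitly that the assumption is read as holding for all shifts in a neighbourhood of the baseline, which is the natural DoE interpretation. I would also check that the differentiations are justified by smoothness of the flow map $(s,t,\boldsymbol{w})\mapsto\phi^m_s\circ\phi^n_t(\boldsymbol{w})$, which holds because the $V_m$ are $\mathcal{C}^\infty$ (or at least $\mathcal{C}^2$) on the compact $\mathcal{W}$. As a corollary, the BCH expansion~\eqref{eq_bch} truncates and gives $\exp(\sum_m\Delta\tau^m_kV_m)=\exp(\Delta\tau^1_kV_1)\circ\cdots\circ\exp(\Delta\tau^p_kV_p)$. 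This identity can be proved directly, without BCH, by checking that both sides are flows of the same field $\sum_m\Delta\tau^m_kV_m$.
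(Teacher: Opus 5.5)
Your proof is correct, and it reaches the result by a somewhat different route from the paper's.

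\textbf{Necessity.} The paper uses the group commutator around a small closed loop. It asserts $\phi^n_{-t}\circ\phi^m_{-s}\circ\phi^n_t\circ\phi^m_s(\boldsymbol{w})=\boldsymbol{w}+st[V_m,V_n](\boldsymbol{w})+\mathcal{O}(s^2t+st^2)$, replaces the left side by the identity, and divides by $st$. You instead differentiate the commutation identity in $s$ to get the invariance $(\phi^n_t)^*V_m=V_m$, and then in $t$ to get $\mathcal{L}_{V_n}V_m=[V_n,V_m]=0$. Both extract the bracket as the mixed second derivative at $(s,t)=(0,0)$. Yours rests on the textbook identity $\mathcal{L}_VW=[V,W]$, whereas the paper relies on a loop expansion it states without derivation. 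The paper's version does carry the more geometric picture of a parallelogram of flows failing to close.

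\textbf{Sufficiency.} The paper only cites a theorem on commuting flows. You prove it via $\frac{d}{dt}(\phi^n_t)^*V_m=(\phi^n_t)^*[V_n,V_m]$ and uniqueness of integral curves, so your argument is self-contained. Likewise, your direct flow argument for $\exp(\sum_m\Delta\tau^m_kV_m)=\bigcirc_m\exp(\Delta\tau^m_kV_m)$ is sounder than the paper's appeal to BCH in $\diff(\mathcal{W})$.

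\textbf{Reading of the assumption.} Your reading of Assumption~\ref{assumption_path_independence} as holding for all shifts near $0$ is exactly what the paper uses implicitly: it invokes ``arbitrary effective covariate changes'' and then lets $s,t\to0$.

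\textbf{Caveats shared with the paper.} First, deriving completeness of the flows from compactness needs $\partial\mathcal{W}=\emptyset$, or fields tangent to the boundary. Second, your final substitution $s=\Delta\tau^m_k$, $t=\Delta\tau^n_k$ is valid only for spatially constant shifts. For spatially varying shifts, the maps $\boldsymbol{w}\mapsto\phi^m_{\Delta\tau^m_k(\boldsymbol{w})}(\boldsymbol{w})$ need not commute even when $[V_m,V_n]=0$.
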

The formal proof of Lemma~\ref{lemma_commute} is provided in Appendix~\ref{appendix_1}. 

Based on this lemma, the estimation bottleneck identified on \ref{eq_composition} is immediately resolved, as all higher-order nested Lie brackets in the BCH formula can be truncated. 
Now, we can formalize the functional model of spatial deformations driven by multiple covariates as the following proposition: 
\begin{proposition}[Explicit Form of Covariate-Driven Deformation, Linear]
    \label{proposition_linear}
    Given a baseline spatial deformation $f_0: \mathcal{S} \rightarrow \mathcal{W}$ associated with a vector composed of independent, spatial varying covariate channels $\boldsymbol{\tau}_0 = [\tau_0^1(\boldsymbol{w}), \cdots, \tau_0^p(\boldsymbol{w})]^\top$, and another shifted covariate vector $\boldsymbol{\tau}_k$, let $V_1, \cdots, V_p$ be commutative base velocity fields corresponding to each covariate channel, in the Lie algebra $\mathfrak{g}$ of $\diff(\mathcal{W})$. The total spatial deformation, $f_k: \mathcal{S} \rightarrow \mathcal{W}$, driven by $\boldsymbol{\tau}_k$, is given by the exact sequential composition: 
    $$
    \begin{aligned}
        f_k(\boldsymbol{s}) & = h_k^p \circ \cdots \circ h_k^1 \circ f_0 (\boldsymbol{s}) \\
        & = \exp (\Delta \tau_k^p V_p) \circ \cdots \circ \exp (\Delta \tau_k^1 V_1) \circ f_0 (\boldsymbol{s}) \\
        & = \left[ \bigcirc_{m = 1}^p \exp(\Delta \tau_k^m V_m) \right] \circ f_0(\boldsymbol{s}),
    \end{aligned}
    $$
    where $\Delta \tau_k^m = \tau_k^m - \tau_0^m$ is the amount of the $m$-th covariate shifted from the baseline state. 
\end{proposition}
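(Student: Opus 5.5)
The proof starts from the model in \eqref{eq_composition}, where the relative deformation is $h_k = \exp\bigl(\sum_{m=1}^p \Delta\tau_k^m V_m\bigr)$, together with the anchoring relation $f_k = h_k \circ f_0$. The goal is to show that the exponential of the sum factors into an ordered composition of single-channel exponentials, and that the order of the factors does not matter. By Lemma~\ref{lemma_commute}, Assumption~\ref{assumption_path_independence} gives $[V_m,V_n]=0$ for all $m\neq n$. Since the Lie bracket is bilinear, the scaled fields $X_m := \Delta\tau_k^m V_m$ also commute pairwise. This holds because the coefficients $\Delta\tau_k^m$ are treated as scalars at the level of the Lie algebra. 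I would note this modelling convention explicitly. The reason is that if $\tau$ is spatially varying, the products $\Delta\tau^m(\boldsymbol{w})V_m$ are generally no longer commuting fields. So I would read the proposition with the scaling understood as in \eqref{eq_linear}.

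The key step is the two-field identity $\exp(X+Y)=\exp(X)\circ\exp(Y)$ for commuting vector fields $X,Y$. I would not rely on formal truncation of the BCH series \eqref{eq_bch}, because convergence of BCH in $\diff(\mathcal{W})$ is not guaranteed in the Fr\'echet setting. Instead I would argue with flows. Let $\phi^X_t$ and $\phi^Y_t$ be the flows of $X$ and $Y$; these are complete because $\mathcal{W}$ is compact. Vanishing of $[X,Y]$ is equivalent to $(\phi^X_t)_*Y = Y$, which in turn is equivalent to $\phi^X_t\circ\phi^Y_s=\phi^Y_s\circ\phi^X_t$. This is the same equivalence used in Lemma~\ref{lemma_commute}. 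Now define $\psi_t=\phi^X_t\circ\phi^Y_t$ and differentiate:
$$\tfrac{d}{dt}\psi_t(\boldsymbol{w}) = X(\psi_t(\boldsymbol{w})) + (d\phi^X_t)\,Y(\phi^Y_t(\boldsymbol{w})) = (X+Y)(\psi_t(\boldsymbol{w})),$$
where the second equality uses the invariance $(\phi^X_t)_*Y=Y$. Since $\psi_0=\mathrm{id}$, uniqueness of ODE solutions gives $\psi_t=\phi^{X+Y}_t$. Setting $t=1$ yields the identity.

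Next, I would induct on $p$. The field $X_p$ commutes with $\sum_{m<p}X_m$ by bilinearity, so
$$\exp\Bigl(\sum_{m=1}^p X_m\Bigr)=\exp(X_p)\circ\exp\Bigl(\sum_{m<p}X_m\Bigr)=\exp(X_p)\circ\cdots\circ\exp(X_1).$$
Composing on the right with $f_0$ then gives the stated formula for $f_k$. Because the factors commute, the composition $\bigcirc_{m}$ is independent of ordering, which justifies the unordered notation.

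I expect the main obstacle to be the rigor of the flow argument in the infinite-dimensional Lie group, and specifically the equivalence between $[X,Y]=0$ and commuting flows. I would handle this by working pointwise with finite-dimensional ODEs on the compact manifold $\mathcal{W}$, where the standard proof applies: differentiate $(\phi^X_{-t})_*Y$ in $t$, obtain a Lie-derivative expression that vanishes, and conclude that the pushforward is constant. A secondary point is interpretational: the covariate shifts must be treated as constants, or the relevant fields must be assumed to commute after scaling, for the factorization to hold exactly.
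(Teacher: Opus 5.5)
Your proof is correct. It shares the paper's skeleton: start from Eq.~(\ref{eq_composition}), obtain $[V_m,V_n]=0$ from Lemma~\ref{lemma_commute}, pass to the scaled fields by bilinearity, induct on $p$, and compose with $f_0$. The difference is the key two-field step. The paper invokes the BCH expansion (\ref{eq_bch}) and argues that all nested brackets vanish, so the series truncates to $\exp(X+Y)=\exp(X)\circ\exp(Y)$. You instead show that $\psi_t=\phi^X_t\circ\phi^Y_t$ solves $\dot\psi_t=(X+Y)\circ\psi_t$, using only $(\phi^X_t)_*Y=Y$, and then conclude by ODE uniqueness.

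The paper's route is shorter and fits its narrative about truncating higher-order channel interactions. However, it is formal: in the Fr\'echet Lie group $\diff(\mathcal{W})$ the exponential map is not a local diffeomorphism and the BCH series has no convergence guarantee, so ``truncation'' is not by itself a proof. Your route needs only finite-dimensional ODE theory on $\mathcal{W}$. The one hypothesis it uses is completeness of the flows, which holds on compact $\mathcal{W}$ provided the fields are tangent to any boundary, as Lie algebra elements of $\diff(\mathcal{W})$ must be. It also gives the identity for every $t$, not just $t=1$.

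Your caveat about spatially varying shifts is well taken, and it applies equally to the paper's own proof. The bracket is only $\mathbb{R}$-bilinear, since $[aX,bY]=ab[X,Y]+a(Xb)Y-b(Ya)X$ for functions $a,b$. So the paper's step $[\Delta\tau_k^m V_m,\Delta\tau_k^n V_n]=\Delta\tau_k^m\Delta\tau_k^n[V_m,V_n]=0$ is valid only when the shifts are constant on $\mathcal{W}$. Your explicit restriction to constant shifts is the correct scope of the argument, not a gap in it.
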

The formal proof of Proposition~\ref{proposition_linear} is provided in Appendix~\ref{appendix_2}. 

Proposition 1 establishes an explicit functional form of spatial deformations $f_k(\cdot) = f(\cdot, \boldsymbol{\tau}_k)$, which is precisely the goal we set in Sec.~\ref{sec_setup}. 
Based on this result, the predictive modeling of nonstationary GPs is essentially transformed into a simple regression problem, which can be solved with the algorithm discussed in Sec.~\ref{sec_estimation}. 

However, assuming the spatial deformation scales proportionally as in Proposition~\ref{proposition_linear} can be too restrictive for general experimental settings, as physical systems frequently exhibit nonlinear responses. 
To break the linear structure, we can generalize the model by introducing a link function $g_m(\cdot)$ for each channel. 
This function transforms the raw covariate input into an actual effective amount applied to the velocity field, yielding a more generalized relative mapping isolated for the $m$-th channel: 
\begin{equation}\label{eq_glm}
    h_k^m(\boldsymbol{w}) = \exp \left( g_m(\Delta \tau_k^m) V_m \right)(\boldsymbol{w}),
\end{equation}
where $g_m: \mathbb{R} \rightarrow \mathbb{R}$ are continuous, strictly monotonic functions. 

This is a direction extension of Eq.~\ref{eq_linear}, significantly enhancing the expressive power of the covariate-driven deformation model. 
To preserve the mathematical validity of the spatial deformation, these link functions are required to be continuous and monotonic, stemming from both identifiability and bijectivity. 
First, if $g_m(\cdot)$ were non-monotonic, distinct covariate shifts could produce the exact same effective flow amount, rendering the model parameters unidentifiable from the observed data. 
Second, as the covariate $\tau^m(\boldsymbol{w})$ could vary continuously across the spatial domain, an oscillating $g_m(\cdot)$ could cause adjacent spatial locations to experience reversed deformation directions. 
This can lead to spatial folding or tearing, thereby violating the bijective constraint of $f(\cdot)$. 

Eq.~\ref{eq_glm} further leads to an extension of Proposition~\ref{proposition_linear}: 
\begin{proposition}[Explicit Form of Covariate-Driven Deformation, General Monotonic]
    \label{proposition_glm}
    Besides the settings of Proposition~\ref{proposition_linear}, if the covariates are Lipschitz continuous with constants $L_m = \sup_{\boldsymbol{w} \in \mathcal{W}} \Vert \nabla \tau^m(\boldsymbol{w}) \Vert_2$, respectively, then with a set of continuous, strictly monotonic linking functions $g_1, \cdots, g_p$ satisfying $\vert g_m' \vert < \frac{1}{L_m}$, the total spatial deformation, $f_k: \mathcal{S} \rightarrow \mathcal{W}$, driven by $\boldsymbol{\tau}_k$, is given by the exact sequential composition: 
    $$
    \begin{aligned}
        f_k(\boldsymbol{s}) & = h_k^p \circ \cdots \circ h_k^1 \circ f_0 (\boldsymbol{s}) \\
        & = \exp \left( g_p \left( \Delta \tau_k^p \right) V_p \right) \circ \cdots \circ \exp \left( g_1 \left( \Delta \tau_k^1 \right) V_1 \right) \circ f_0 (\boldsymbol{s}) \\
        & = \left[ \bigcirc_{m = 1}^p \exp \left( g_m \left( \Delta \tau_k^m \right) V_m \right) \right] \circ f_0(\boldsymbol{s}),
    \end{aligned}
    $$
    where $\Delta \tau_k^m = \tau_k^m - \tau_0^m$ is the amount of the $m$-th covariate shifted from the baseline state. 
\end{proposition}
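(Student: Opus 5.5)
The plan is to follow the proof of Proposition~\ref{proposition_linear} and identify the single place where the spatial variation of the covariates makes the monotone link case different. There are three steps.

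\textbf{Step 1 (reduction to the linear case for constant covariates).} If $\Delta\tau_k^m$ is constant over $\mathcal{W}$, then $c_m = g_m(\Delta\tau_k^m)$ is just a real number. Since $\mathfrak{g}$ is a vector space, $c_m V_m\in\mathfrak{g}$, and $[c_mV_m, c_nV_n]=c_mc_n[V_m,V_n]=0$ by Lemma~\ref{lemma_commute}. For commuting vector fields the flows commute, so $\exp(\sum_m c_mV_m)=\bigcirc_m \exp(c_mV_m)$. Equivalently, every nested bracket in the BCH expansion (\ref{eq_bch}) vanishes. Composing with $f_0$ then gives the displayed formula exactly as in Proposition~\ref{proposition_linear}, with $\Delta\tau_k^m$ replaced by $g_m(\Delta\tau_k^m)$. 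Strict monotonicity of $g_m$ is not used here; it only enters through identifiability.

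\textbf{Step 2 (spatially varying covariates: well-definedness).} When $\tau^m$ varies in $\boldsymbol{w}$, define $h_k^m(\boldsymbol{w})=\Phi^m_{a_m(\boldsymbol{w})}(\boldsymbol{w})$. Here $\Phi^m_t$ is the time-$t$ flow of $V_m$ and $a_m=g_m\circ\Delta\tau_k^m$. This map is smooth in $\boldsymbol{w}$ because $\Phi^m$ is smooth in $(t,\boldsymbol{w})$ and $a_m$ is Lipschitz/$\mathcal{C}^1$. We must show each $h_k^m$ is a diffeomorphism of $\mathcal{W}$. Once that holds, the composition lies in $\diff(\mathcal{W})$ by the group closure established earlier, and $f_k=h_k\circ f_0$ is a valid deformation.

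\textbf{Step 3 (the main obstacle: injectivity and Jacobian nondegeneracy).} Differentiating gives
$$Dh_k^m(\boldsymbol{w}) = D\Phi^m_{a_m}(\boldsymbol{w}) + V_m\bigl(\Phi^m_{a_m}(\boldsymbol{w})\bigr)\,\nabla a_m(\boldsymbol{w})^\top.$$
Factoring out $D\Phi^m_{a_m}$, which is invertible, and using that the flow transports $V_m$ to itself, $D\Phi^m_t V_m=V_m\circ\Phi^m_t$, gives
$$Dh_k^m = D\Phi^m_{a_m}\bigl(I + V_m(\boldsymbol{w})\nabla a_m^\top\bigr).$$
By the matrix determinant lemma,
$$\det Dh_k^m=\det D\Phi^m_{a_m}\,\bigl(1+\nabla a_m\cdot V_m\bigr),$$
and $\vert\nabla a_m\cdot V_m\vert\le \vert g_m'\vert L_m\Vert V_m\Vert_\infty$. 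The hypothesis $\vert g_m'\vert<1/L_m$ makes this quantity less than one once $V_m$ is normalized so that $\Vert V_m\Vert_\infty\le 1$; the scale can be absorbed into $g_m$, which I would state explicitly. The determinant is therefore positive, so $h_k^m$ is an orientation-preserving local diffeomorphism.

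For global bijectivity, restrict $h_k^m$ to each integral curve of $V_m$. Parametrize the curve by flow time $t$. Along it, the map is $t\mapsto t+a_m(\Phi^m_t(\boldsymbol{w}_0))$, whose derivative is $1+\nabla a_m\cdot V_m>0$, so it is strictly increasing and hence injective on each orbit. Since $h_k^m$ maps orbits to themselves, it is injective overall. A local diffeomorphism of a compact connected manifold that is injective is onto, by an open/closed argument. If the curve is periodic, use its lift instead.

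Finally, commutativity of the $h_k^m$ themselves is needed for the order-free expression $\bigcirc_m$. Because $[V_m,V_n]=0$, there are local coordinates in which the flows are commuting translations, and each $a_m$ is simply a scalar shift along its coordinate. I expect this step to be delicate. With spatially varying $a_m$, exact commutation may only hold under Assumption~\ref{assumption_path_independence} applied pointwise. If it fails in general, I would state the result for the fixed order $p,\dots,1$ written in the proposition, which requires no commutation.
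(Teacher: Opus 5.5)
Your proof is correct, and its core coincides with the paper's. The paper also reduces everything to showing that each $h_k^m$ is an orientation-preserving diffeomorphism. It differentiates $\exp(tV_m)(\boldsymbol{w})$ at $t=g_m(\Delta\tau_k^m(\boldsymbol{w}))$ by the chain rule, then combines the self-transport identity $A\,V_m(\boldsymbol{w})=V_m(h_k^m(\boldsymbol{w}))$ with the matrix determinant lemma to reach the factor $1+\nabla g_m(\Delta\tau_k^m)\cdot V_m$. It bounds that factor by Cauchy--Schwarz, silently assuming $\Vert V_m\Vert_2\le 1$. Your explicit normalization is an improvement. Note that it amounts to the hypothesis $\vert g_m'\vert\,\Vert V_m\Vert_\infty<1/L_m$, and that both you and the paper really need $L_m$ to bound $\nabla\Delta\tau_k^m$, not just $\nabla\tau^m$.

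You go further than the paper in two places. First, the paper stops at positivity of the Jacobian, which only yields a local diffeomorphism, and covers global injectivity with the remark that particle paths are ``free of folding''. Your orbit-wise monotonicity of $t\mapsto t+a_m(\Phi^m_t(\boldsymbol{w}_0))$, whose derivative is exactly that factor, together with the open/closed argument, is the rigorous version of that remark.

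Second, your doubt about commutation is well founded, and the paper does not address it. Take $\mathbb{T}^2$ with $V_1=\partial_x$, $V_2=\partial_y$, $a_1=\epsilon\sin y$, $a_2=\epsilon\sin x$, where $g_m=\mathrm{id}$, $\tau_0^m=0$ and $0<\epsilon<1$, so all hypotheses hold. Dropping $k$, one gets $h^1\circ h^2(x,y)=(x+\epsilon\sin(y+\epsilon\sin x),\,y+\epsilon\sin x)$ and $h^2\circ h^1(x,y)=(x+\epsilon\sin y,\,y+\epsilon\sin(x+\epsilon\sin y))$. These differ from each other and from the pointwise flow $(x+\epsilon\sin y,\,y+\epsilon\sin x)$.

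Consequently, with spatially varying covariates the composition must be read in the fixed order $p,\dots,1$. The BCH factorization of your Step 1 is in general only available for spatially constant shifts. What both proofs actually establish is that this ordered composition is a valid orientation-preserving diffeomorphism, so your fallback formulation is the right one.
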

The formal proof of Proposition~\ref{proposition_glm} is provided in Appendix~\ref{appendix_3}

\subsection{Predictive Modeling of Covariate-Drive Nonstationary Gaussian Processes based on Deformation Method} \label{sec_estimation}

With the explicit functional forms established in Propositions~\ref{proposition_linear} and \ref{proposition_glm}, the complex problem of out-of-sample prediction for nonstationary GPs is reduced to learning the base velocity fields and their corresponding link functions. 
This section outlines the algorithm to utilize it for predicting nonstationary GP covariance under novel, unobserved conditions, given the setup in Sec.~\ref{sec_setup}. 
This end-to-end algorithm is formalized in Algorithm~\ref{alg_pipeline}. 

\begin{algorithm}[htbp]
\caption{Predictive Modeling of Covariate-Driven Nonstationary GPs}
\label{alg_pipeline}
\begin{spacing}{0.85}
\begin{algorithmic}[1]
\Require 
    A dataset $\mathcal{Y} = \{ Y_k \}_{k = 1}^N$ of nonstationary GPs and associated covariate vectors $\boldsymbol{\tau}_k$ defined on domain $\mathcal{S}$; 
    estimated deformations $f_k$ mapping them to the same base process $Z$ defined on $\mathcal{W}$; 
    selected baseline spatial mapping $f_0$ and associated baseline covariates $\boldsymbol{\tau}_0$; 
    novel covariates $\tau_{\text{new}}$ for prediction. 
\Ensure 
    Predicted nonstationary GP covariance $\hat{\Sigma}_{\text{new}}$.
\vspace{1.5mm}
\Statex \textbf{\textit{Phase 1: Estimation (Training)}}
\vspace{1.5mm}
\State \textbf{Initialize:} Velocity fields $V_m$ and link functions $g_m$ for $m=1,\dots,p$.
\While{not converged}
    \State Loss $\mathcal{L}_{total} \leftarrow 0$
    \For{$k = 1$ to $N$}
        \State \textbf{Initialize:} $\hat{f}_k(\boldsymbol{s}) \leftarrow f_0(\boldsymbol{s}), \quad \forall \boldsymbol{s} \in \mathcal{S}$
        \For{$m = 1$ to $p$}
            \State Calculate covariate shift: $\Delta\tau_k^m(\boldsymbol{s}) = \tau_k^m(\boldsymbol{s}) - \tau_0^m(\boldsymbol{s})$
            \State Compute effective flow amount: $t_m(\boldsymbol{s}) = g_m(\Delta\tau_k^m(\boldsymbol{s}))$
            \State Integrate via ODE solver: $\hat{f}_k(\boldsymbol{s}) \leftarrow \hat{f}_k(\boldsymbol{s}) + \exp \left( t_m(\boldsymbol{s}) V_m \right)$
        \EndFor
        \State Accumulate mapping error: $\mathcal{L}_{total} \leftarrow \mathcal{L}_{total} + \text{Loss}(\hat{f}_k(\boldsymbol{s}), f_k(\boldsymbol{s}))$
    \EndFor
    \State Update parameters $V_m, g_m$ via gradient descent to minimize $\mathcal{L}_{total}$.
\EndWhile
\State \textbf{Fix parameters:} $V_m \leftarrow V_m^*$, $g_m \leftarrow g_m^*$.

\vspace{1.5mm}
\Statex \textbf{\textit{Phase 2: Out-of-Sample Prediction (Inference)}}
\vspace{1.5mm}
\State \textbf{Initialize:} $\hat{f}_{\text{new}}(\boldsymbol{s}) \leftarrow f_0(\boldsymbol{s}), \quad \forall \boldsymbol{s} \in \mathcal{S}$
\For{$m = 1$ to $p$}
    \State Calculate target shift: $\Delta\tau_{\text{new}}^m(\boldsymbol{s}) = \tau_{\text{new}}^m(\boldsymbol{s}) - \tau_0^m(\boldsymbol{s})$
    \State Compute effective flow amount: $t_m(s) = g_m(\Delta\tau_{\text{new}}^m(\boldsymbol{s}))$
    \State Predict latent coordinates via ODE solver: $\hat{f}_{\text{new}}(\boldsymbol{s}) \leftarrow \hat{f}_{\text{new}}(\boldsymbol{s}) + \exp \left( t_m(\boldsymbol{s}) V_m \right)$
\EndFor
\State \textbf{Reconstruct Covariance:}
\For{$\boldsymbol{s}_i, \boldsymbol{s}_j \in \mathcal{S}$}
    \State $\hat{\Sigma}_{\text{new}}(\boldsymbol{s}_i, \boldsymbol{s}_j) = C_{\boldsymbol{\theta}}(||f_{\text{new}}(\boldsymbol{s}_i) - f_{\text{new}}(\boldsymbol{s}_j)||)$
\EndFor
\\State \Return $\hat{\Sigma}_{\text{new}}$
\end{algorithmic}
\end{spacing}
\end{algorithm}

The implementation is divided into two phases. 
We discuss their implementation details separately: 
\begin{enumerate}
    \item In phase 1, the objective is to estimate the channel-wise base velocity fields $V_1, \cdots, V_p$ and link functions $g_1, \cdots, g_p$. 
    Because $V_m$ are continuous vector fields, and $g_m$ are continuous, strictly monotonic functions, both of them can be parameterized via flexible function approximators, e.g., basis expansions for limited sample sizes or NNs with sufficient samples. 
    While ODE gives the definition of exponential mappings on a Lie group, multiple efficient approximations have been established \citep{higham2008functions, al2010new}, and therefore, the computational burden of this algorithm is acceptable. 
    Notably, we perform the estimation via forward integration in the deformed coordinate space, rather than via regression in the Lie algebra. 
    This is because the Lie logarithm mapping of a diffeomorphism is numerically ill-posed and highly sensitive to spatial fluctuations \citep{beg2005computing, hernandez2018newton}, which do exist in the deformations $f_k$ empirically estimated from noisy observations. 
    Conversely, the exponential mapping via forward ODE integration inherently acts as a smoothing operator in practice \citep{polzin2018large}, yielding a more stable optimization landscape.  
    The targets $V_m$ and $g_m$ are jointly updated by minimizing the spatial discrepancy (e.g., mean squared error) between the model-predicted deforemd coordinates $\hat{f}_k(\boldsymbol{s})$ and the empirically deformed coordinates $f_k(\boldsymbol{s})$. 
    \item In phase 2, the model is deployed to predict the nonstationary covariance for a novel covariate condition, based on the velocity fields and link functions estimated. 
    The algorithm computes the predicted deformed coordinates in the latent space $\mathcal{W}$ driven by the learned dynamics.
    Thus, the covariance matrix $\Sigma_{\text{new}}$ can be reconstructed with $C_{\boldsymbol{\theta}}$ and the predicted deformed coordinates, as the base process in the latent space $\mathcal{W}$ inherently maintains isotropic. 
\end{enumerate}

\section{Experimental Studies} \label{sec_experiment}

In this section, we present: (i) A designed simulation study verifies the out-of-sample prediction capability of the covariate-driven model. (ii) We apply the method to the AM surface deviation dataset to rigorously benchmark our predictive performance against established spatial modeling techniques. (iii) The terrain temperature application demonstrates the method's broad generalizability across diverse physical domains, providing a visual illustration of how local terrain explicitly governs the velocity fields as well as the resulting spatial deformations. 

\subsection{Simulation Study} 

The primary challenge in validating nonstationary spatial modeling on physical data is the absence of a ground-truth covariance structure, as the experimental datasets often provide a single, noisy realization. 
Therefore, we design this simulation study where the true base process, the deformation mechanism, and the resulting true nonstationary covariances are analytically defined. 

We establish a two-dimensional continuous spatial domain $\mathcal{S} = \mathcal{W} = [-1, 1]^2$. 
The latent isotropic base process is governed by a Mat\'ern kernel $C_{\boldsymbol{\theta}}$. 
Two independent covariate channels, $\tau^1$ and $\tau^2$ are defined, which manipulate the spatial deformation via two distinct base velocity fields $V_1$ and $V_2$: 
$$
    V_1 = [\sin (\pi s_x), 0]^\top,
    \quad 
    V_2 = [0, \exp(-5 s_y^2)]^\top,
$$
where $\boldsymbol{s} = (s_x, s_y)^\top \in \mathcal{S}$. 

\begin{figure}[!htp]
    \centering{
        \includegraphics[width=0.6\textwidth]{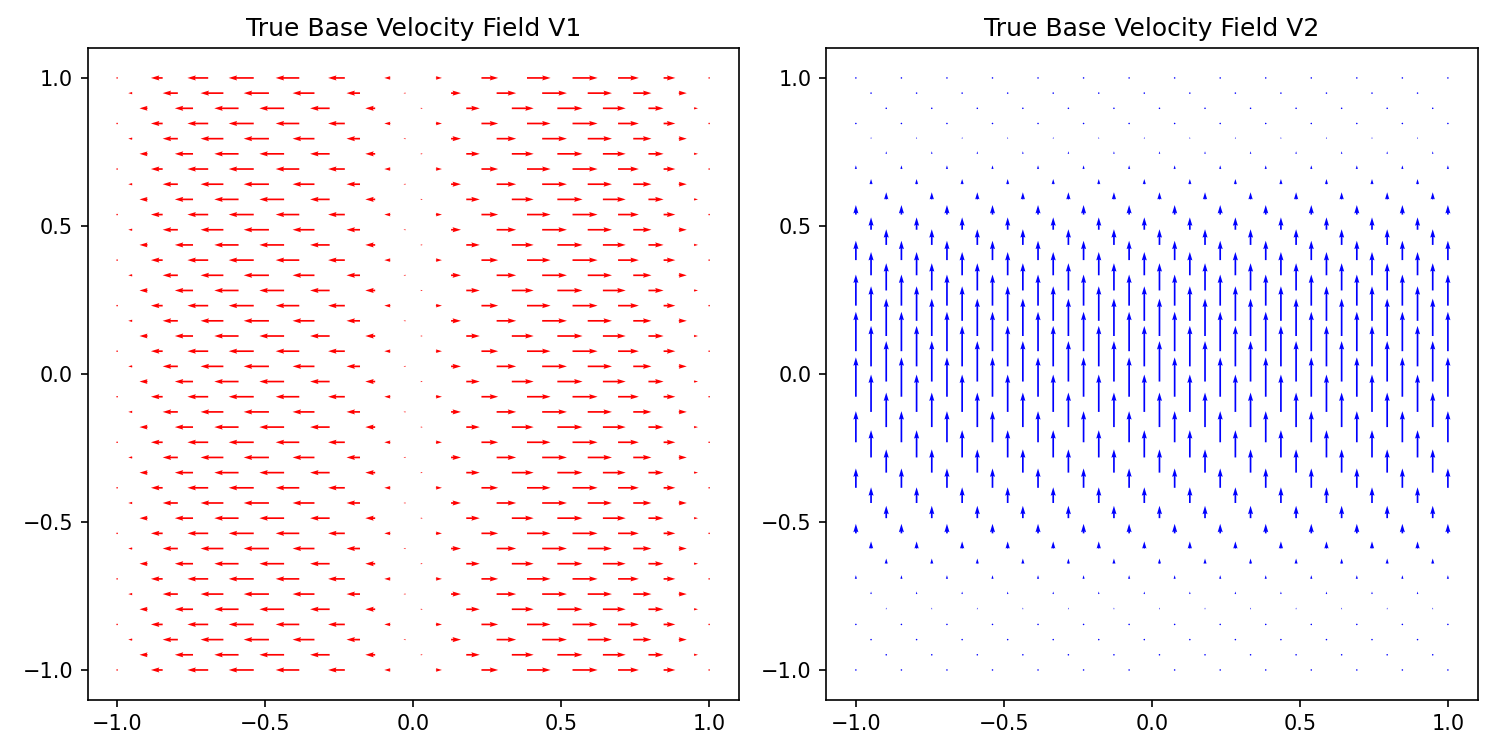}
    }
    \caption{\label{fig_simu_true_v} True base velocity fields $V_1$ and $V_2$ associated with covariate channel $\tau^1$ and $\tau^2$, respectively.}
\end{figure}

The two velocity fields are also plotted in Fig.~\ref{fig_simu_true_v}. This satisfies the Design of Experiments path independence condition (Assumption 2) required for stable estimation. 
However, despite this commutativity, the underlying functions are highly nonlinear.
Fig.~\ref{fig_simu_deformed_realization} shows that, by altering the values of $\tau^1$ and $\tau^2$, the combination of these fields can produce varying nonstationary covariance structures. 

\begin{figure}[!htp]
    \centering{
        \includegraphics[width=0.95\textwidth]{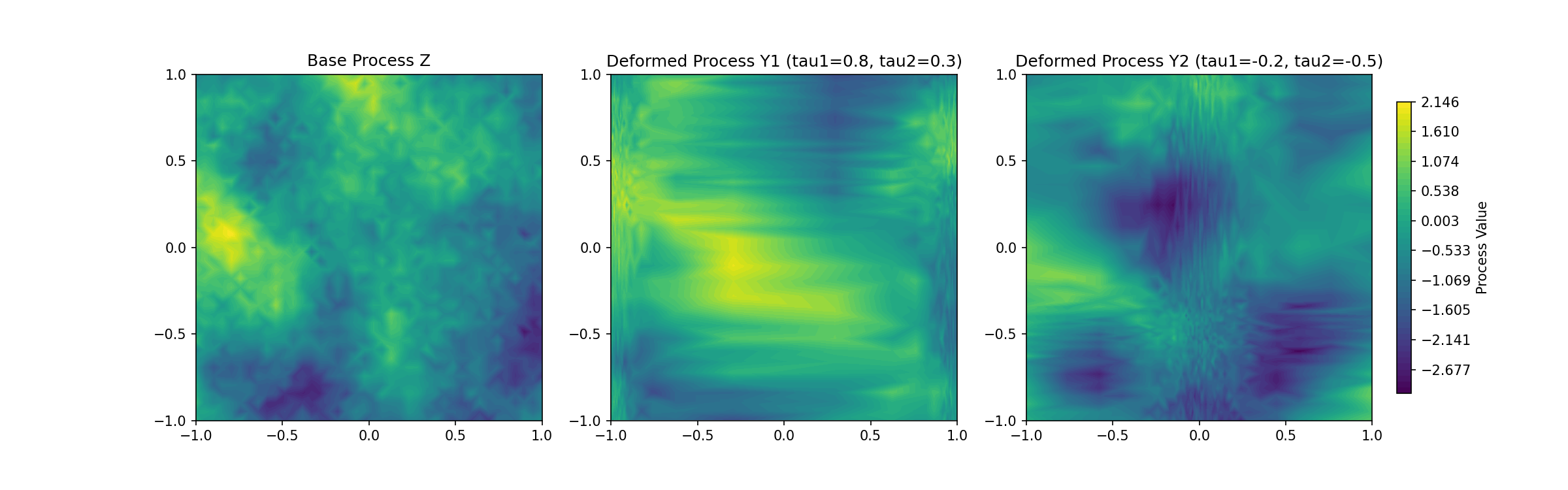}
    }
    \caption{\label{fig_simu_deformed_realization} With the base velocity fields defined as Fig.~\ref{fig_simu_true_v}, the isotropic base process $Z$ (left) can be deformed into distinct nonstationary GPs (middle and right) with different covariate conditions. The three subplots are the same realization but with different deformations.}
\end{figure}

We simulate an extreme data-scarse scenario, with only four observations from distinct covariate conditions available, as shown in Tab.~\ref{tab_simu_covariate}. 
Also, the new covariate condition to be predicted is out of the range of the observed samples, asking for the model to have a capability of extrapolation. 

\begin{table}[!htp]
    \centering
    \caption{\label{tab_simu_covariate}Simulated covariate conditions of training samples $k = 1, \cdots, 4$, and new conditions to be predicted.}
    \begin{tabular}{c|cccc|c}
        \toprule
        $k$ & 1 & 2 & 3 & 4 & new \\
        \midrule
        $\tau^1$ & 0.0 & 0.5 & 0.8 & 0.4 & 0.3 \\
        $\tau^2$ & 0.0 & 0.1 & 0.1 & 0.7 & -0.5  \\
        \bottomrule
    \end{tabular}
\end{table}

Following the pipeline in Algorithm~\ref{alg_pipeline}, we employ the tensor product of b-spline bases as the function approximator, and assume $g_m(\Delta \tau^m) = \Delta \tau^m$, as the observations are truly sparse. 
After phase 1, the estimated velocity fields are plotted in black in Fig.~\ref{fig_simu_estimated_v}, while the true velocity fields are plotted in color. 
From the figures, we can see that the two velocity fields are successfully recovered, with minor errors induced by estimation or added noise in the observation. 
Further, we use them to make the out-of-sample prediction of the covariance structure driven by $\boldsymbol{\tau}_{\text{new}}$, and the comparison between the pixel-wise covariance matrices is shown in Fig.~\ref{fig_simu_estimated_cov}. 
The nonstationary covariance structure is fully predicted by the model in this data-scarce scenario, indicating the strong capability of capturing the driving mechanism of nonstationarity and making extrapolative predictions at unseen covariate conditions. 

\begin{figure}[!htp]
    \centering{
        \includegraphics[width=0.6\textwidth]{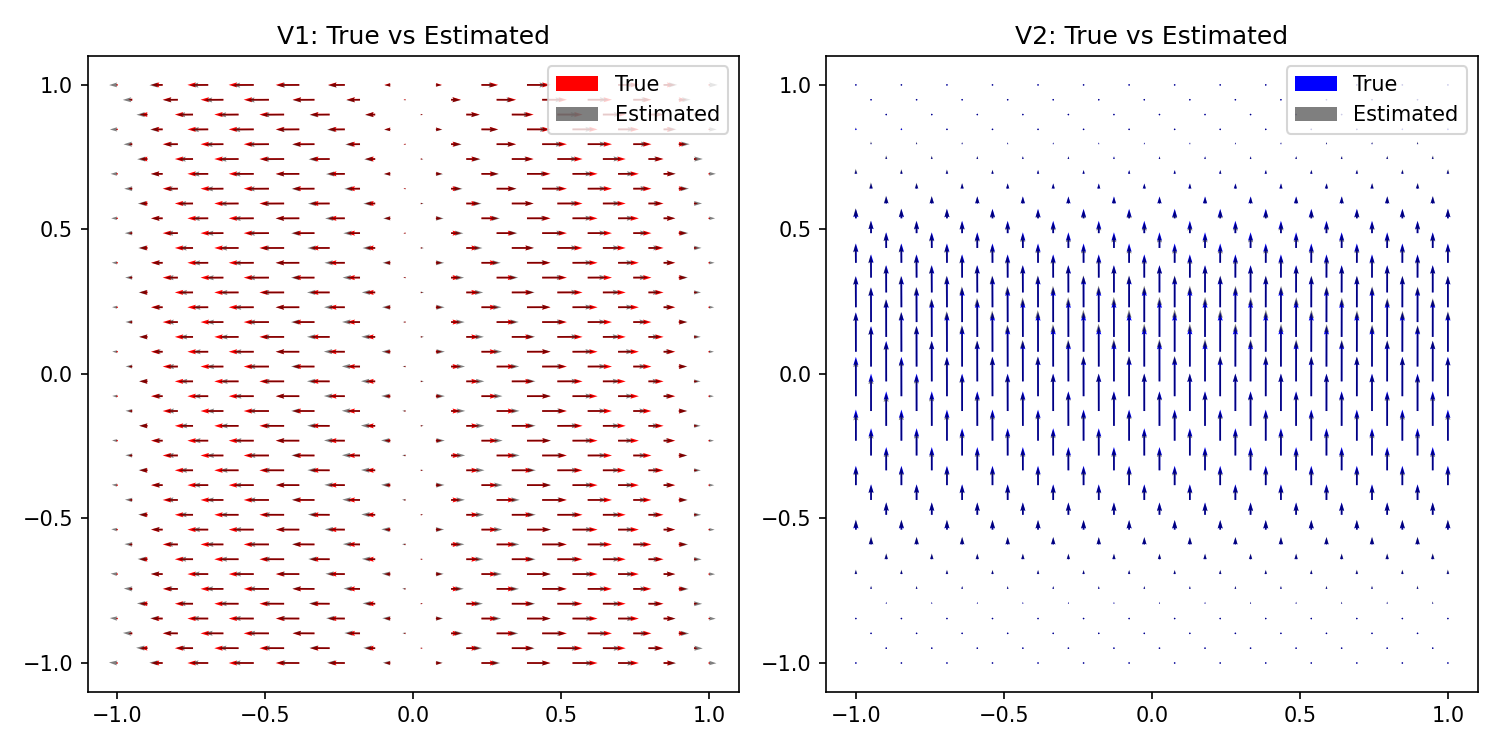}
    }
    \caption{\label{fig_simu_estimated_v} The estimated velocity fields (grey) are close to the underlying truth (red/blue).}
\end{figure}

\begin{figure}[!htp]
    \centering{
        \includegraphics[width=0.6\textwidth]{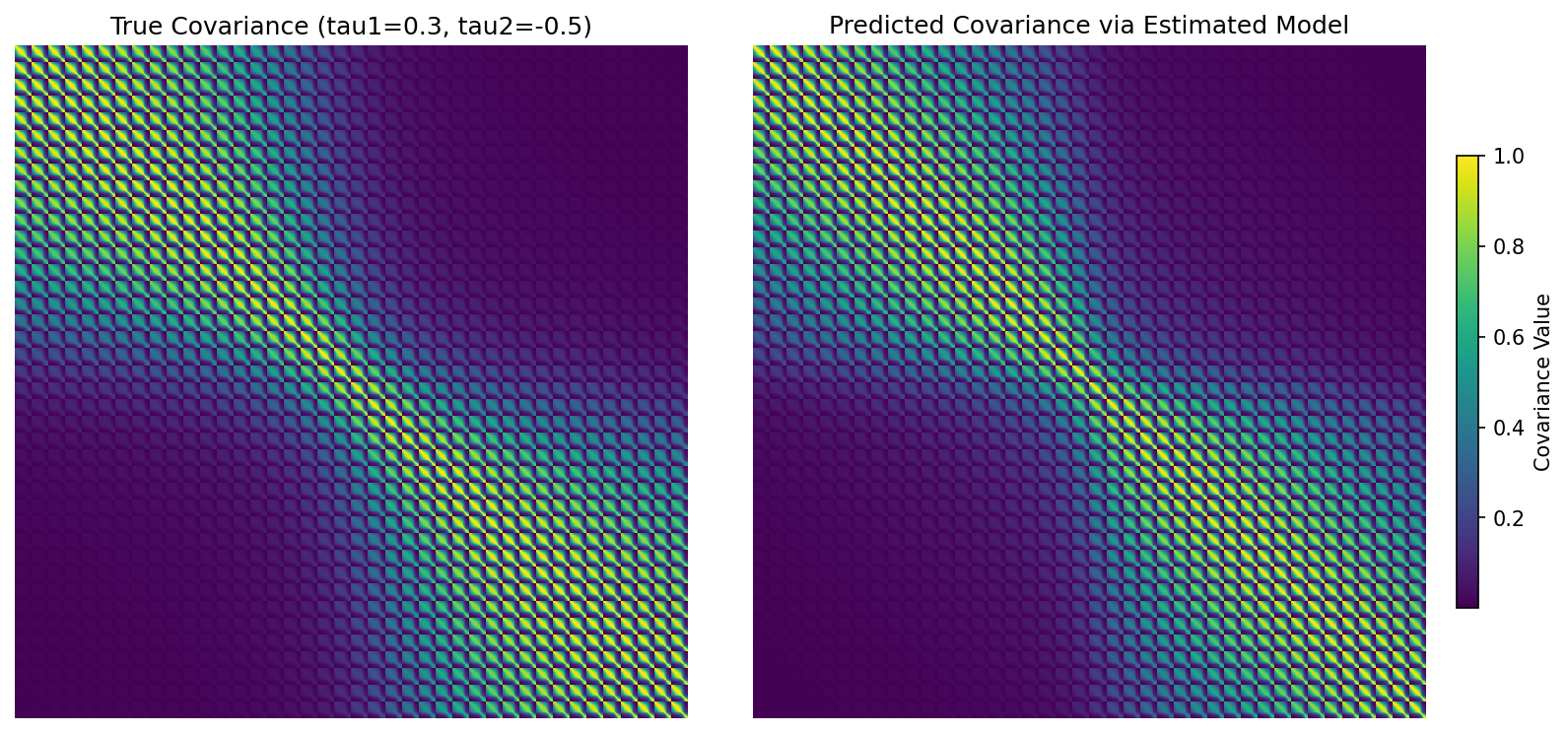}
    }
    \caption{\label{fig_simu_estimated_cov} The comparison between pixel-wise covariance matrices: the underlying true at $\boldsymbol{\tau}_{\text{new}}$ (left) and the out-of-sample prediction (right).}
\end{figure}

\subsection{Case Study: Additive Manufacturing Surface Deviations} 

To evaluate the proposed method in a real-world physical context, we apply it to the surface deviation distribution prediction of FDM-fabricated AM products. 
As originally illustrated in Fig.~\ref{fig_example}, the surface deviation pattern of printed parts exhibits strong nonstationarity jointly impacted by local surface geometry and process-related covariates \citep{gu2026surface}. 

We focus on the two dome-shaped parts with different sizes, $\mathcal{D}_1$ and $\mathcal{D}_2$, $\mathcal{D}_1$ for training and $\mathcal{D}_2$ for testing, as shown in Fig.~\ref{fig_am_pipeline}(a) and (d). 
While the entire printed product constitutes a surface manifold, learning a single, global set of continuous velocity fields to map this entire topology is mathematically prohibitive. 
Therefore, we pre-process the dome surfaces by partitioning them into localized spherical patches each, surface deviation observations on the 50 patches from $\mathcal{D}_1$ compose the training dataset $\mathcal{Y}_{\text{train}} = \{ Y_k \}_{k = 1}^{50}$, while the 50 patches from $\mathcal{D}_2$ the testing dataset $\mathcal{Y}_{\text{test}}$. 
The azimuthal angle, the polar angle and the in-layer radius of the patch are selected as the three channels of the covariate vector $\boldsymbol{\tau}$, as defined in Fig.~\ref{fig_am_pipeline}(b). 
They are selected based on the domain knowledge on AM surface quality \citep{gu2025identification, gu2026surface}. 

\begin{figure}[!htp]
    \centering{
        \includegraphics[width=0.9\textwidth]{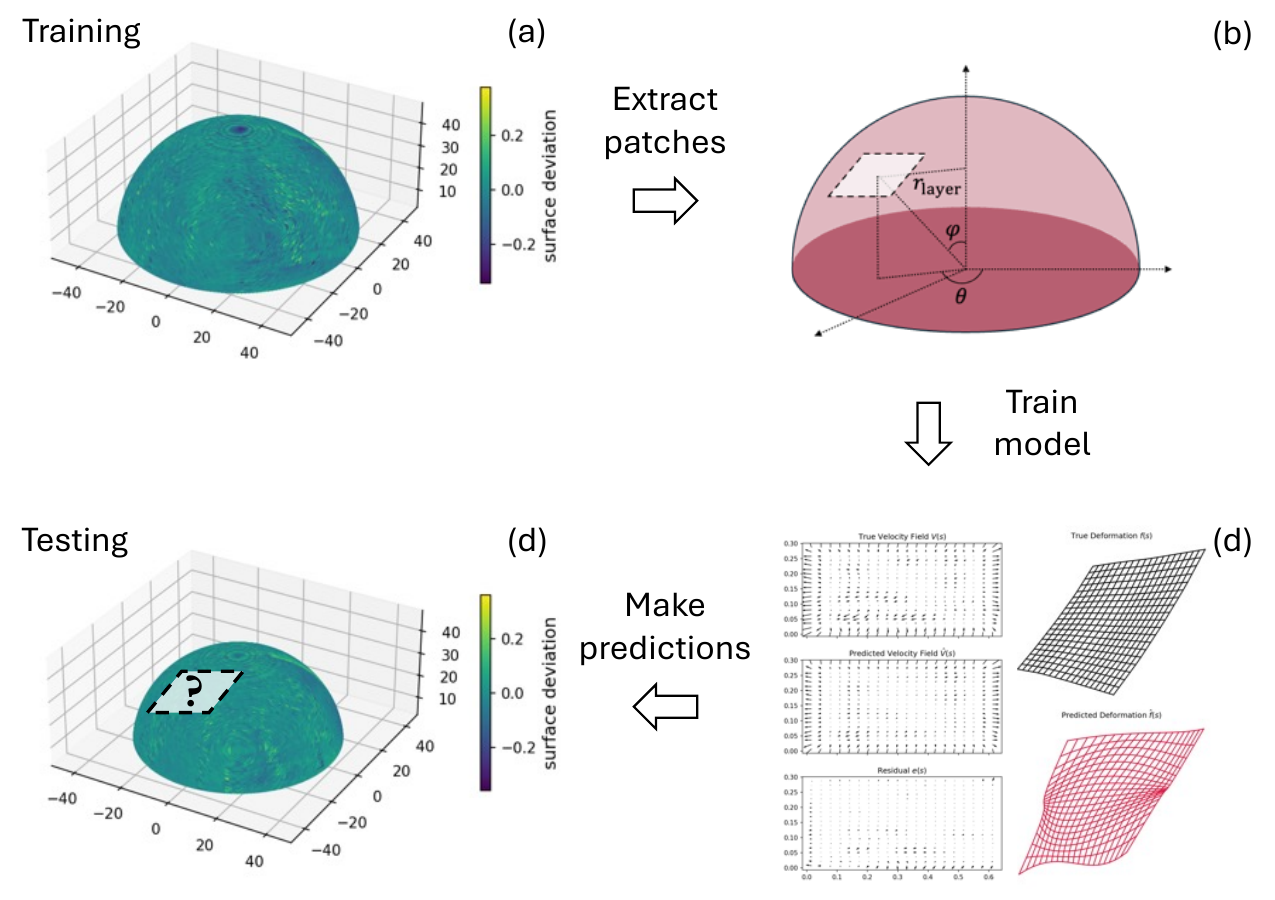}
    }
    \caption{\label{fig_am_pipeline} (a) Dome $\mathcal{D}_1$ for training. (b) Illustration of patch covariates: $\tau^1 = \theta$ is the azimuthal angle; $\tau^2 = \varphi$ is the polar angle; $\tau^3 = r_{\text{layer}}$ is the in-layer radius. (c) The model is trained following the pipeline in Algorithm~\ref{alg_pipeline}. (d) Dome $\mathcal{D}_2$ for testing. }
\end{figure}

\begin{figure}[!htp]
    \centering{
        \includegraphics[width=0.8\textwidth]{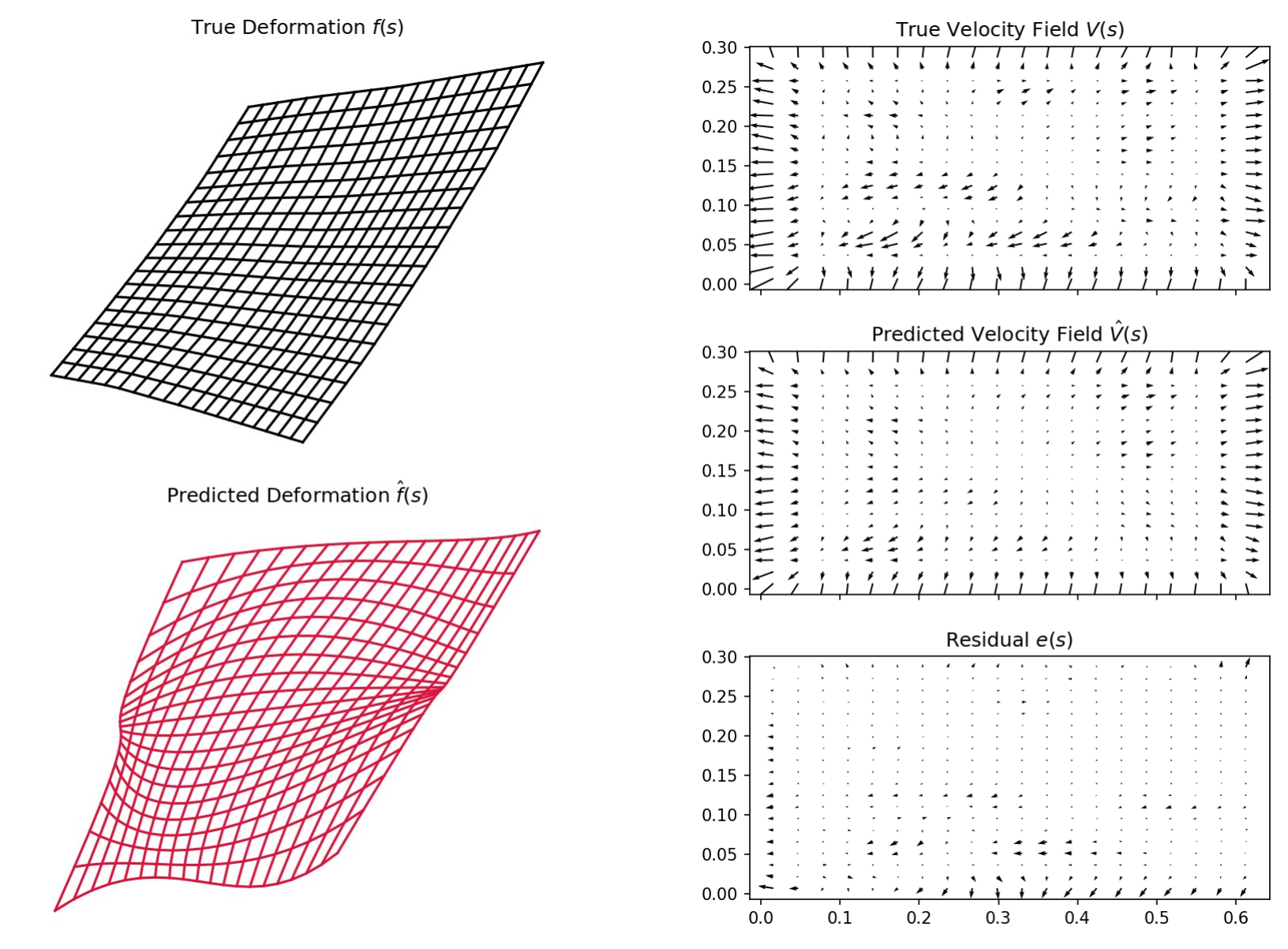}
    }
    \caption{\label{fig_am_prediction} (a) Dome $\mathcal{D}_1$ for training. (b) Illustration of patch covariates: $\tau^1 = \theta$ is the azimuthal angle; $\tau^2 = \varphi$ is the polar angle; $\tau^3 = r_{\text{layer}}$ is the in-layer radius. (c) The model is trained following the pipeline in Algorithm~\ref{alg_pipeline}. (d) Dome $\mathcal{D}_2$ for testing. }
\end{figure}

We follow the pipeline in Algorithm~\ref{alg_pipeline}, estimating the phase 1 model with the patches in $\mathcal{Y}_{\text{train}}$ and making deformation/covariance predictions for $\mathcal{Y}_{\text{test}}$. 
The left of Fig.~\ref{fig_am_prediction} presents the comparison between the true deformation $f_{\text{test},k}$ of a patch in $\mathcal{Y}_{\text{test}}$ and its predicted deformation $\hat{f}_{\text{test},k}$. 
From the comparison, the prediction captures the main deformation trend required for restoring isotropy in this patch, while some minor inconsistencies are potentially due to local numerical instability. 
We further plot the true and predicted velocity fields, $V_{\text{test},k}$ and $\hat{V}_{\text{test},k}$, repectively, driving such deformations, in the right of Fig.~\ref{fig_am_prediction}. 
The residual $e(\boldsymbol{s}) = V_{\text{test},k}(\boldsymbol{s}) - \hat{V}_{\text{test},k}(\boldsymbol{s})$ is also plotted, and the error is very small compared to the true velocity field.

As the underlying true covariances of patches in $\mathcal{Y}_{\text{test}}$ are unknown, we employ the log-likelihood of surface deviation profiles with respect to the covariances predicted as the quantitative evaluation metric: $\log \ell \left( \hat{\Sigma}_{\text{test},k} \vert Y_{\text{test},k} \right) = \log \mathbb{P} \left( Y_{\text{test},k} \vert \hat{\Sigma}_{\text{test},k} \right)$. 

To rigorously evaluate the capability of our proposed method, we compare the likelihood from our predictive model to three other benchmarks: 
\begin{enumerate}
    \item Na\"ive stationary GP model: We first employ the conventional stationary GP model with Mat\'ern kernel to compare whether our method can capture the heterogeneity due to covariate differences. 
    \item Automatic relevance determination (ARD) GP model: ARD is a highly popular and established model in spatial statistics and machine learning in handling multiple covariates \citep{bishop2006pattern, seeger2004gaussian}. 
    \item Unstructured NN deformation model: One may question whether the functional relationship $f_k(\cdot) = f(\cdot, \boldsymbol{\tau}_k)$ can be directly captured by an NN, without any topological or algebraic constraints enforced. Therefore, we compare our method established in Sec.~\ref{sec_method} with a multilayer perceptron (MLP) model. 
\end{enumerate}

The comparison between our method and the benchmarks is shown in Fig.~\ref{fig_am_result}.
Each column of the plot reflects the distribution of pair-wise likelihood differences, $\log \ell_{\text{benchmark},k} - \log \ell_{\text{method},k}$. 
Therefore, the negative values indicate the superiority of our proposed method to the specific benchmark, and vice versa. 
The detailed statistics of likelihood differences are also summarized in Tab.~\ref{tab_am_result}, in which the $p$-value is for the pair-wise $t$-test, examining whether there is a significant mean value difference in the likelihood. 

\begin{figure}[!htp]
    \centering{
        \includegraphics[width=0.6\textwidth]{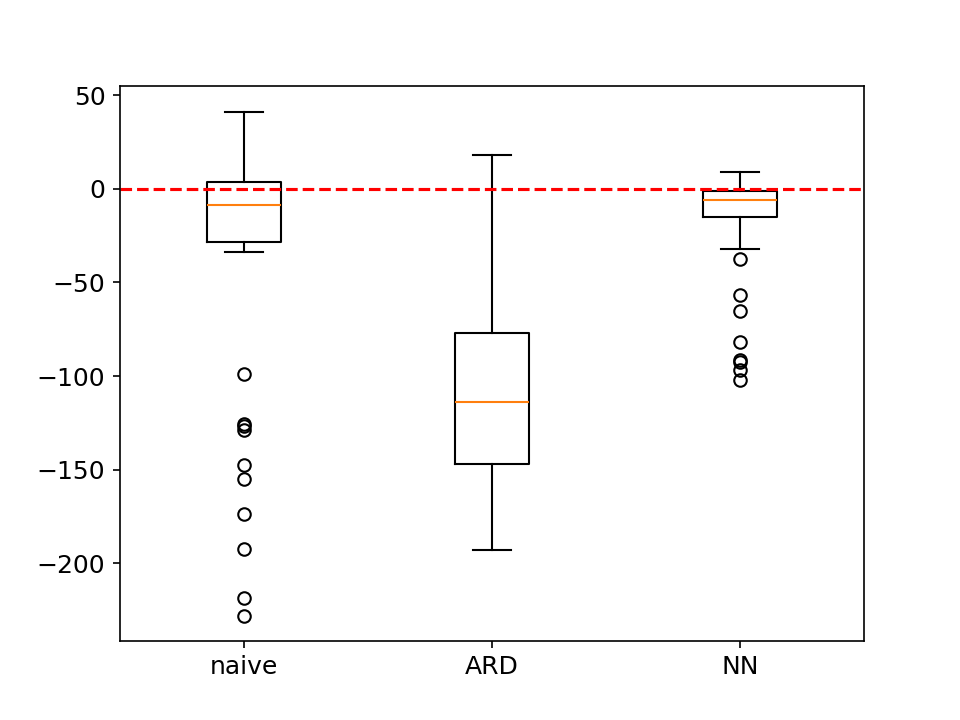}
    }
    \caption{\label{fig_am_result} Boxplot of pair-wise likelihood differences $\log \ell_{\text{benchmark},k} - \log \ell_{\text{method},k}$: Our proposed method outperforms all three benchmark methods.}
\end{figure}

\begin{table}[!htp]
    \centering
    \caption{\label{tab_am_result}Statistics of likelihood differences between benchmark methods and our proposed method.}
    \begin{tabular}{c|cc|c}
        \toprule
        Benchmark method & $\mu(\log \ell_{\text{b}} - \log \ell_{\text{m}})$ & $\sigma(\log \ell_{\text{b}} - \log \ell_{\text{m}})$ & $p$-value \\
        \midrule
        Na\"ive & 33.70 & 68.25 & 0.00051 \\
        ARD & 111.10 & 45.89 & 0 \\
        NN & 17.57 & 28.95 & 0.00004 \\
        \bottomrule
    \end{tabular}
\end{table}

From both the boxplot and the table, the out-of-sample predictive performance of our method consistently outperforms all three benchmarks. 
The na\"ive stationary GP model exhibits a moderate level of predictive power but is troubled by severe, extreme negative outliers. 
This indicates that while a purely spatial stationary model might perform adequately on test patches that reflect the global average state, it fails catastrophically when dealing with some strong nonstationarity driven by covariates. 
The ARD GP demonstrates uniformly poor predictive capability. 
This confirms that the nonstationarity inherent in AM surface deviation is too complex to be captured by simply scaling individual dimensions linearly via ARD. 
Notably, while the unstructured NN deformation model achieves a performance closest to our proposed method, it suffers from a pronounced tail of negative outliers. 
Although the deep NN acts as a highly flexible universal approximator, its lack of geometric regularization makes it prone to spatial folding and overfitting when trained on limited sample sizes.
This provides strong empirical evidence for the necessity of our formulation. 

\subsection{Case Study: Terrain Temperatures} 

\begin{figure}[!htp]
    \centering{
        \includegraphics[width=0.9\textwidth]{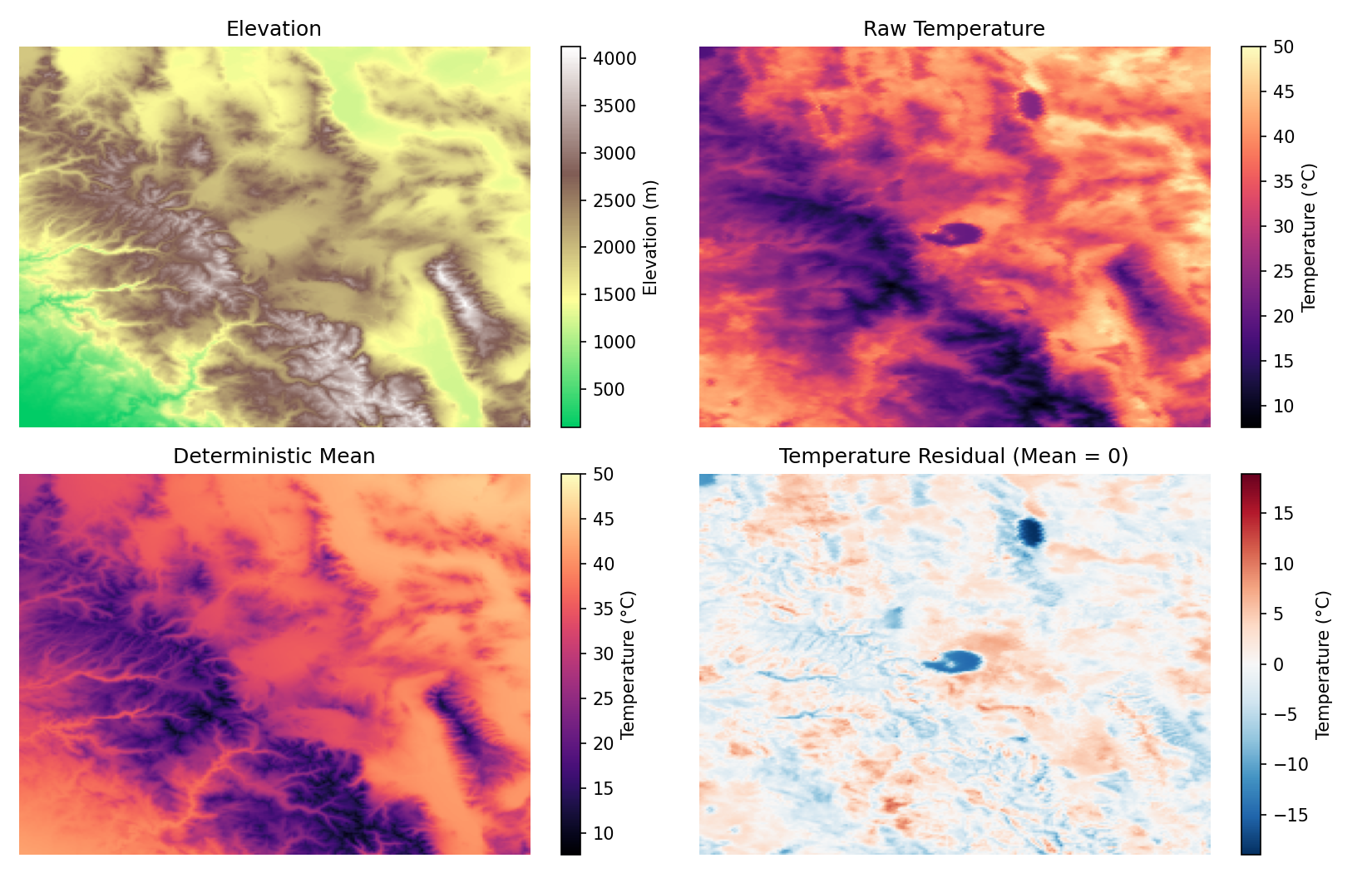}
    }
    \caption{\label{fig_temperature}The high-resolution terrain and temperature observation over a region of interest \citep{wan2015mod11a2, nasa2013shuttle}. Top left: elevation map; top right: raw summer temperature measurement; bottom left: deteministc mean temperature given by climatology models; bottom right: mean-zero temperature residual.}
\end{figure}

To demonstrate the broad generalizability and physical interpretability of the proposed covariate-driven method beyond AM, we apply it to a terrain temperature dataset. 
The top two subplots of Fig.~\ref{fig_temperature} illustrates the elevation and the summer temperature in a mountainous region. 
Specifically, this region of interest is located between 120 and 118 degrees west longitude and between 37 and 39 degrees north latitude. 
After extracting the deterministic mean (bottom left) given by advanced climatological models \citep{hijmans2005very}, the residual temperature field (bottom right) still exhibits complex spatial nonstationarity. 
While traditional models treat these residual variations as arbitrary spatial noise, visual inspection strongly suggests they are modulated by the underlying terrain topography. 

The objective of this exploratory experiment is qualitative: to evaluate the model's capacity for autonomous physical discovery. 
We utilize the localized high-resolution SRTM elevation profile as the sole covariate channel. 
By training the model to learn deformations based exclusively on local elevation, we investigate whether the resulting velocity field successfully captures a physically meaningful relationship. 
Specifically, we aim to observe if the predicted dynamics naturally align with the topological reality of the landscape. 
Similar to the previous AM experiment, we partition the region into smaller patches and fit the model within the patches, following the pipeline in Algorithm~\ref{alg_pipeline}. 

\begin{figure}[!htp]
    \centering{
        \includegraphics[width=0.95\textwidth]{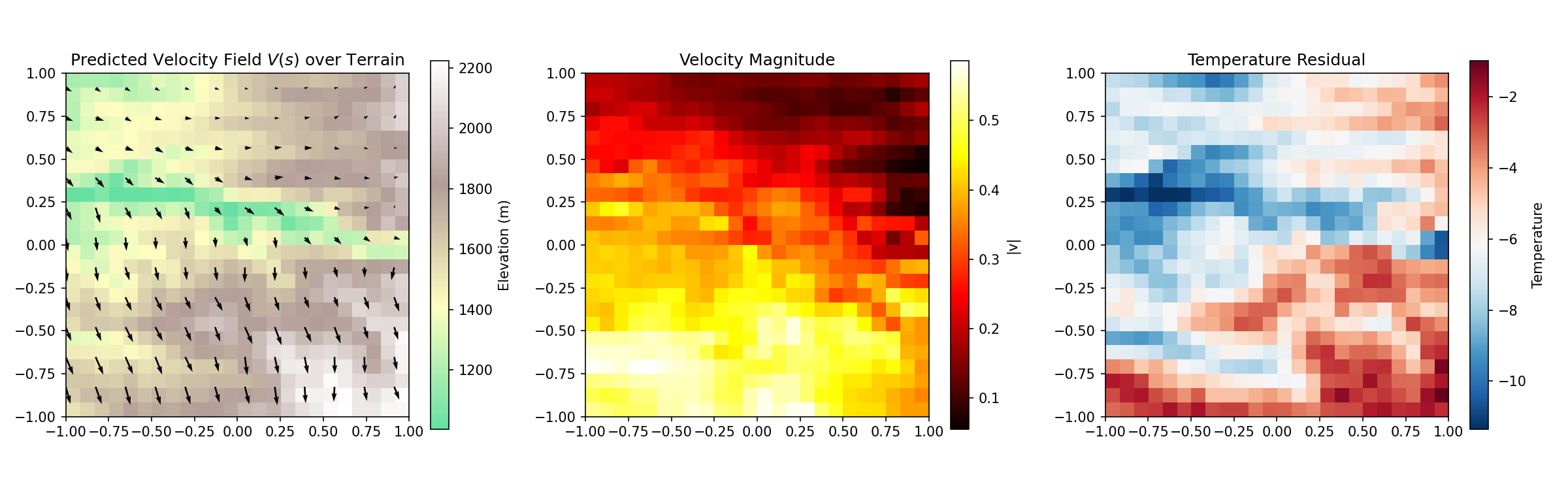}
        \includegraphics[width=0.95\textwidth]{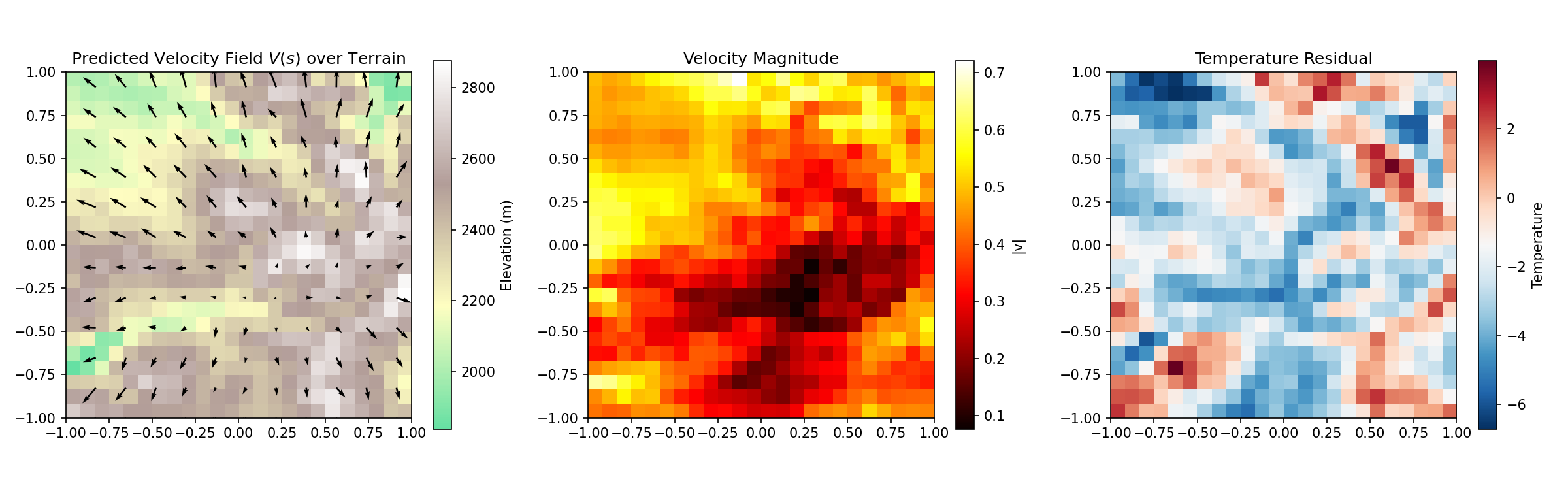}
    }
    \caption{\label{fig_temp_pred} Predicted velocity fields (left), velocity magnitude, and true temperatures over two representative terrain patches.}
\end{figure}

Based on the model estimated, we predict velocity fields for two representative patches with complex local terrain, as plotted in Fig.~\ref{fig_temp_pred}. 
We need to notice here that whether the direction of the velocity fields is pointing uphill or downhill is not so different (the specific direction can be related to the anchoring of patches), as both mean stretching the local domain apart. 
Rather, the magnitude of velocity should be emphasized here. 
From the predicted velocity fields (left) and the velocity magnitude plots (middle), we can clearly see that in areas with dramatic topographic changes, the velocity field magnitude is also greater. 
Physically speaking, this means that areas with larger elevation spans are often accompanied by significant temperature variations, and therefore, deformation driven by the velocity field tends to stretch and separate uncorrelated regions. 
Conversely, in regions with more isotropic topography, such as plains or mountain peaks, the velocity magnitude is very small, corresponding to a stronger local correlation. 
Thus, our proposed model greatly aligns the temperature dynamics with the terrain topology well. 

\section{Conclusion} \label{sec_conclusion}

This paper introduced a novel method, extending the conventional deformation method to enable out-of-sample prediction for nonstationary GPs. 
By characterizing spatial deformations through the lens of Lie algebra and considering the commutativity between multiple covariates, we established a concise, explicit functional form of multi-covariate-driven spatial deformations. 
And based on the theoretical base, we presented the algorithm to efficiently train the model of covariate-driven deformations with limited sample sizes, and to make predictions of nonstationary GPs under novel, unseen covariate conditions. 

The various empirical studies demonstrate the capability of our method. 
In the simulated environment, the model successfully disentangled and recovered the true underlying physical dynamics from noisy observations. 
In the additive manufacturing study, our method significantly outperformed multiple benchmarks, proving that the geometric regularization of velocity fields is critical for accurately extrapolating topological warping under novel manufacturing conditions. 
Furthermore, the terrain temperature case study highlighted the model's capacity for autonomous physical discovery, naturally learning how topological barriers disrupt spatial correlation.

Several promising avenues remain for future research. 
First, while the current framework assumes independent covariate channels, many complex engineering systems exhibit strong interaction effects. 
Future work could relax the strict commutativity assumption by introducing structured penalty terms for the Lie brackets, allowing for controlled geometric interactions without entirely destabilizing the optimization landscape. 
Second, the current estimation of the velocity fields is deterministic. 
Developing a fully Bayesian formulation for the Neural ODE parameters would allow for rigorous uncertainty quantification not only of the predicted GP but of the spatial deformation dynamics themselves. 

\section*{Acknowledgement}

This work is partially supported by the National Science Foundation with Grant\# CMMI–2328455. 

Land surface temperature data are acquired from the MODIS Terra MOD11A2 Version 6.1 product \citep{wan2015mod11a2}, temporally averaged over the period of June 1, 2023, to September 30, 2023, to establish a stable summer baseline. Elevation data are acquired from the SRTM 1 Arc-Second Global Version 3 dataset \citep{nasa2013shuttle}. Both are accessed and spatially aligned to a 1 km resolution grid via Google Earth Engine \citep{gorelick2017google}.

\bibliography{bibliography.bib}

@article{kennedy2001bayesian,
  title={Bayesian calibration of computer models},
  author={Kennedy, Marc C and O'Hagan, Anthony},
  journal={Journal of the Royal Statistical Society: Series B (Statistical Methodology)},
  volume={63},
  number={3},
  pages={425--464},
  year={2001},
  publisher={Wiley Online Library}
}

@book{stein1999interpolation,
  title={Interpolation of spatial data: some theory for kriging},
  author={Stein, Michael L},
  year={1999},
  publisher={Springer Science \& Business Media}
}

@article{seeger2004gaussian,
  title={Gaussian processes for machine learning},
  author={Seeger, Matthias},
  journal={International journal of neural systems},
  volume={14},
  number={02},
  pages={69--106},
  year={2004},
  publisher={World Scientific}
}

@article{matos2020improving,
  title={Improving additive manufacturing performance by build orientation optimization},
  author={Matos, Marina A and Rocha, Ana Maria AC and Pereira, Ana I},
  journal={The International Journal of Advanced Manufacturing Technology},
  volume={107},
  number={5},
  pages={1993--2005},
  year={2020},
  publisher={Springer}
}

@article{gu2026surface,
  title={Surface Quality Characterization, Learning and Prediction for 3D Printed Two-dimensional Free-form Products Through Dimensional Reduction},
  author={Gu, Minghao and Ruiz, Cesar and Huang, Qiang},
  journal={IEEE Transactions on Automation Science and Engineering},
  year={2026},
  publisher={IEEE}
}

@article{wan2014new,
  title={New refinements and validation of the collection-6 MODIS land-surface temperature/emissivity product},
  author={Wan, Zhengming},
  journal={Remote sensing of Environment},
  volume={140},
  pages={36--45},
  year={2014},
  publisher={Elsevier}
}

@article{farr2007shuttle,
  title={The shuttle radar topography mission},
  author={Farr, Tom G and Rosen, Paul A and Caro, Edward and Crippen, Robert and Duren, Riley and Hensley, Scott and Kobrick, Michael and Paller, Mimi and Rodriguez, Ernesto and Roth, Ladislav and others},
  journal={Reviews of geophysics},
  volume={45},
  number={2},
  year={2007},
  publisher={Wiley Online Library}
}

@article{hijmans2005very,
  title={Very high resolution interpolated climate surfaces for global land areas},
  author={Hijmans, Robert J and Cameron, Susan E and Parra, Juan L and Jones, Peter G and Jarvis, Andy},
  journal={International Journal of Climatology: A Journal of the Royal Meteorological Society},
  volume={25},
  number={15},
  pages={1965--1978},
  year={2005},
  publisher={Wiley Online Library}
}

@article{fouedjio2017second,
  title={Second-order non-stationary modeling approaches for univariate geostatistical data},
  author={Fouedjio, Francky},
  journal={Stochastic Environmental Research and Risk Assessment},
  volume={31},
  number={8},
  pages={1887--1906},
  year={2017},
  publisher={Springer}
}

@article{sauer2023non,
  title={Non-stationary Gaussian process surrogates},
  author={Sauer, Annie and Cooper, Andrew and Gramacy, Robert B},
  journal={arXiv preprint arXiv:2305.19242},
  year={2023}
}

@article{risser2016nonstationary,
  title={Nonstationary spatial modeling, with emphasis on process convolution and covariate-driven approaches},
  author={Risser, Mark D},
  journal={arXiv preprint arXiv:1610.02447},
  year={2016}
}

@article{sampson1992nonparametric,
  title={Nonparametric estimation of nonstationary spatial covariance structure},
  author={Sampson, Paul D and Guttorp, Peter},
  journal={Journal of the American Statistical Association},
  volume={87},
  number={417},
  pages={108--119},
  year={1992},
  publisher={Taylor \& Francis}
}

@misc{higdon1999non,
  title={Non-stationary spatial modeling. Bayesian Statistics 6, eds. J. Bernardo, J. Berger, A. Dawid, and A. Smith},
  author={Higdon, D and Swall, J and Kern, J},
  year={1999},
  publisher={Oxford, UK: Oxford University Press}
}

@article{paciorek2003nonstationary,
  title={Nonstationary covariance functions for Gaussian process regression},
  author={Paciorek, Christopher and Schervish, Mark},
  journal={Advances in neural information processing systems},
  volume={16},
  year={2003}
}

@inproceedings{heinonen2016non,
  title={Non-stationary gaussian process regression with hamiltonian monte carlo},
  author={Heinonen, Markus and Mannerstr{\"o}m, Henrik and Rousu, Juho and Kaski, Samuel and L{\"a}hdesm{\"a}ki, Harri},
  booktitle={Artificial Intelligence and Statistics},
  pages={732--740},
  year={2016},
  organization={PMLR}
}

@article{binois2018practical,
  title={Practical heteroscedastic Gaussian process modeling for large simulation experiments},
  author={Binois, Mickael and Gramacy, Robert B and Ludkovski, Mike},
  journal={Journal of Computational and Graphical Statistics},
  volume={27},
  number={4},
  pages={808--821},
  year={2018},
  publisher={Taylor \& Francis}
}

@article{lindgren2011explicit,
  title={An explicit link between Gaussian fields and Gaussian Markov random fields: the stochastic partial differential equation approach},
  author={Lindgren, Finn and Rue, H{\aa}vard and Lindstr{\"o}m, Johan},
  journal={Journal of the Royal Statistical Society Series B: Statistical Methodology},
  volume={73},
  number={4},
  pages={423--498},
  year={2011},
  publisher={Oxford University Press}
}

@article{nychka2002multiresolution,
  title={Multiresolution models for nonstationary spatial covariance functions},
  author={Nychka, Douglas and Wikle, Christopher and Royle, J Andrew},
  journal={Statistical Modelling},
  volume={2},
  number={4},
  pages={315--331},
  year={2002},
  publisher={Sage Publications Sage CA: Thousand Oaks, CA}
}

@article{katzfuss2017multi,
  title={A multi-resolution approximation for massive spatial datasets},
  author={Katzfuss, Matthias},
  journal={Journal of the American Statistical Association},
  volume={112},
  number={517},
  pages={201--214},
  year={2017},
  publisher={Taylor \& Francis}
}

@article{reich2011class,
  title={A class of covariate-dependent spatiotemporal covariance functions},
  author={Reich, Brian J and Eidsvik, Jo and Guindani, Michele and Nail, Amy J and Schmidt, Alexandra M},
  journal={The annals of applied statistics},
  volume={5},
  number={4},
  pages={2265},
  year={2011}
}

@article{neto2014accounting,
  title={Accounting for spatially varying directional effects in spatial covariance structures},
  author={Neto, Joaquim Henriques Vianna and Schmidt, Alexandra M and Guttorp, Peter},
  journal={Journal of the Royal Statistical Society Series C: Applied Statistics},
  volume={63},
  number={1},
  pages={103--122},
  year={2014},
  publisher={Oxford University Press}
}

@article{ingebrigtsen2014spatial,
  title={Spatial models with explanatory variables in the dependence structure},
  author={Ingebrigtsen, Rikke and Lindgren, Finn and Steinsland, Ingelin},
  journal={Spatial Statistics},
  volume={8},
  pages={20--38},
  year={2014},
  publisher={Elsevier}
}

@article{paciorek2006spatial,
  title={Spatial modelling using a new class of nonstationary covariance functions},
  author={Paciorek, Christopher J and Schervish, Mark J},
  journal={Environmetrics: The official journal of the International Environmetrics Society},
  volume={17},
  number={5},
  pages={483--506},
  year={2006},
  publisher={Wiley Online Library}
}

@article{fuglstad2015does,
  title={Does non-stationary spatial data always require non-stationary random fields?},
  author={Fuglstad, Geir-Arne and Simpson, Daniel and Lindgren, Finn and Rue, H{\aa}vard},
  journal={Spatial Statistics},
  volume={14},
  pages={505--531},
  year={2015},
  publisher={Elsevier}
}

@article{bornn2012modeling,
  title={Modeling nonstationary processes through dimension expansion},
  author={Bornn, Luke and Shaddick, Gavin and Zidek, James V},
  journal={Journal of the American statistical association},
  volume={107},
  number={497},
  pages={281--289},
  year={2012},
  publisher={Taylor \& Francis}
}

@article{rasmussen2001infinite,
  title={Infinite mixtures of Gaussian process experts},
  author={Rasmussen, Carl and Ghahramani, Zoubin},
  journal={Advances in neural information processing systems},
  volume={14},
  year={2001}
}

@article{kim2005analyzing,
  title={Analyzing nonstationary spatial data using piecewise Gaussian processes},
  author={Kim, Hyoung-Moon and Mallick, Bani K and Holmes, Chris C},
  journal={Journal of the American Statistical Association},
  volume={100},
  number={470},
  pages={653--668},
  year={2005},
  publisher={Taylor \& Francis}
}

@article{gramacy2008bayesian,
  title={Bayesian treed Gaussian process models with an application to computer modeling},
  author={Gramacy, Robert B and Lee, Herbert K H},
  journal={Journal of the American Statistical Association},
  volume={103},
  number={483},
  pages={1119--1130},
  year={2008},
  publisher={Taylor \& Francis}
}

@article{gramacy2015local,
  title={Local Gaussian process approximation for large computer experiments},
  author={Gramacy, Robert B and Apley, Daniel W},
  journal={Journal of Computational and Graphical Statistics},
  volume={24},
  number={2},
  pages={561--578},
  year={2015},
  publisher={Taylor \& Francis}
}

@article{tresp2000mixtures,
  title={Mixtures of Gaussian processes},
  author={Tresp, Volker},
  journal={Advances in neural information processing systems},
  volume={13},
  year={2000}
}

@article{konomi2014bayesian,
  title={Bayesian treed multivariate gaussian process with adaptive design: Application to a carbon capture unit},
  author={Konomi, Bledar and Karagiannis, Georgios and Sarkar, Avik and Sun, Xin and Lin, Guang},
  journal={Technometrics},
  volume={56},
  number={2},
  pages={145--158},
  year={2014},
  publisher={Taylor \& Francis}
}

@misc{perrin1998modeling,
  title={Modeling of non-stationary spatial covariance structure by parametric radial basis deformations, volume 11 of Quantitative Geology and Geostatistics},
  author={Perrin, O and Monestiez, P},
  year={1998},
  publisher={Springer Netherlands}
}

@article{iovleff2004estimating,
  title={Estimating a nonstationary spatial structure using simulated annealing},
  author={Iovleff, Serge and Perrin, Olivier},
  journal={Journal of Computational and Graphical Statistics},
  volume={13},
  number={1},
  pages={90--105},
  year={2004},
  publisher={Taylor \& Francis}
}

@article{damian2001bayesian,
  title={Bayesian estimation of semi-parametric non-stationary spatial covariance structures},
  author={Damian, Doris and Sampson, Paul D and Guttorp, Peter},
  journal={Environmetrics: The official journal of the International Environmetrics Society},
  volume={12},
  number={2},
  pages={161--178},
  year={2001},
  publisher={Wiley Online Library}
}

@article{schmidt2003bayesian,
  title={Bayesian inference for non-stationary spatial covariance structure via spatial deformations},
  author={Schmidt, Alexandra M and O'Hagan, Anthony},
  journal={Journal of the Royal Statistical Society Series B: Statistical Methodology},
  volume={65},
  number={3},
  pages={743--758},
  year={2003},
  publisher={Oxford University Press}
}

@article{anderes2008estimating,
  title={Estimating deformations of isotropic Gaussian random fields on the plane},
  author={Anderes, Ethan B and Stein, Michael L},
  journal={The Annals of Statistics},
  volume={36},
  number={2},
  pages={719--741},
  year={2008}
}

@inproceedings{damianou2013deep,
  title={Deep gaussian processes},
  author={Damianou, Andreas and Lawrence, Neil D},
  booktitle={Artificial intelligence and statistics},
  pages={207--215},
  year={2013},
  organization={PMLR}
}

@article{dunlop2018deep,
  title={How deep are deep Gaussian processes?},
  author={Dunlop, Matthew M and Girolami, Mark A and Stuart, Andrew M and Teckentrup, Aretha L},
  journal={Journal of Machine Learning Research},
  volume={19},
  number={54},
  pages={1--46},
  year={2018}
}

@article{perrin1999identifiability,
  title={Identifiability for non-stationary spatial structure},
  author={Perrin, Olivier and Meiring, Wendy},
  journal={Journal of Applied Probability},
  volume={36},
  number={4},
  pages={1244--1250},
  year={1999},
  publisher={Cambridge University Press}
}

@article{fouedjio2015estimation,
  title={Estimation of space deformation model for non-stationary random functions},
  author={Fouedjio, Francky and Desassis, Nicolas and Romary, Thomas},
  journal={Spatial Statistics},
  volume={13},
  pages={45--61},
  year={2015},
  publisher={Elsevier}
}

@inproceedings{gu2025identification,
  title={Identification of Latent Invariant Surface Quality Patterns via Spatial Stochastic Process Deformation},
  author={Gu, Minghao and Huang, Qiang},
  booktitle={2025 IEEE 21st International Conference on Automation Science and Engineering (CASE)},
  pages={1107--1112},
  year={2025},
  organization={IEEE}
}

@article{hamilton1979inverse,
  title={The inverse function theorem of Nash and Moser},
  author={Hamilton, Richard S},
  journal={Nonlinear and global analysis},
  volume={1},
  pages={139},
  year={1979}
}

@book{brin2002introduction,
  title={Introduction to dynamical systems},
  author={Brin, Michael and Stuck, Garrett},
  year={2002},
  publisher={Cambridge university press}
}

@book{johnson2016handbook,
  title={Handbook of fluid dynamics},
  author={Johnson, Richard W},
  year={2016},
  publisher={CRC press}
}

@article{beg2005computing,
  title={Computing large deformation metric mappings via geodesic flows of diffeomorphisms},
  author={Beg, M Faisal and Miller, Michael I and Trouv{\'e}, Alain and Younes, Laurent},
  journal={International journal of computer vision},
  volume={61},
  number={2},
  pages={139--157},
  year={2005},
  publisher={Springer}
}

@book{fisher1966design,
  title={The design of experiments},
  author={Fisher, Sir Ronald Aylmer and Fisher, Ronald A},
  volume={21},
  year={1966},
  publisher={Springer}
}

@book{higham2008functions,
  title={Functions of matrices: theory and computation},
  author={Higham, Nicholas J},
  year={2008},
  publisher={SIAM}
}

@article{al2010new,
  title={A new scaling and squaring algorithm for the matrix exponential},
  author={Al-Mohy, Awad H and Higham, Nicholas J},
  journal={SIAM Journal on Matrix Analysis and Applications},
  volume={31},
  number={3},
  pages={970--989},
  year={2010},
  publisher={SIAM}
}

@article{hernandez2018newton,
  title={Newton-Krylov PDE-constrained LDDMM in the space of band-limited vector fields},
  author={Hernandez, Monica},
  journal={arXiv preprint arXiv:1807.05117},
  year={2018}
}

@phdthesis{polzin2018large,
  title={Large deformation diffeomorphic metric mappings: Theory, numerics, and applications},
  author={Polzin, Thomas},
  year={2018},
  school={Zentrale Hochschulbibliothek L{\"u}beck}
}

@book{bishop2006pattern,
  title={Pattern recognition and machine learning},
  author={Bishop, Christopher M and Nasrabadi, Nasser M},
  volume={4},
  number={4},
  year={2006},
  publisher={Springer}
}

@article{wan2015mod11a2,
  title={MOD11A2 MODIS/Terra land surface temperature/emissivity 8-day L3 global 1km SIN grid V006},
  author={Wan, Zhengming and Hook, Simon and Hulley, Glynn},
  journal={NASA EOSDIS Land Processes Distributed Active Archive Center},
  year={2015}
}

@article{nasa2013shuttle,
  title={Shuttle Radar Topography Mission Global 1 Arc Second},
  author={NASA JPL, NASA},
  journal={NASA EOSDIS Land Processes Distributed Active Archive Center},
  year={2013}
}

@article{gorelick2017google,
  title={Google Earth Engine: Planetary-scale geospatial analysis for everyone},
  author={Gorelick, Noel and Hancher, Matt and Dixon, Mike and Ilyushchenko, Simon and Thau, David and Moore, Rebecca},
  journal={Remote sensing of Environment},
  volume={202},
  pages={18--27},
  year={2017},
  publisher={Elsevier}
}

@book{jacobson2013lie,
  title={Lie algebras},
  author={Jacobson, Nathan},
  year={2013},
  publisher={Courier Corporation}
}

@book{hartman2002ordinary,
  title={Ordinary differential equations},
  author={Hartman, Philip},
  year={2002},
  publisher={SIAM}
}

@incollection{lee2003smooth,
  title={Smooth manifolds},
  author={Lee, John M},
  booktitle={Introduction to smooth manifolds},
  pages={1--29},
  year={2003},
  publisher={Springer}
}

@article{petersen2008matrix,
  title={The matrix cookbook},
  author={Petersen, Kaare Brandt and Pedersen, Michael Syskind and others},
  journal={Technical University of Denmark},
  volume={7},
  number={15},
  pages={510},
  year={2008}
}

\newpage

\appendix

\section{Proof of Lemma~\ref{lemma_commute}} \label{appendix_1}

\begin{proof}
    We go through two parts to demonstrate the equivalence between the macroscopic path independence (Assumption~\ref{assumption_path_independence}) and the microscopic commutativity of velocity fields. 

    \textbf{Part 1: Necessity}
    
    By Assumption~\ref{assumption_path_independence}, the relative deformations induced by isolated channels $m$ and $n$ commute: 
    $$
        h_k^m \circ h_k^n = h_k^n \circ h_k^m, 
    $$
    and this holds for arbitrary effective covariate changes $\Delta \tau_k^m = \tau_k^m - \tau_0^m \in \mathcal{T}^m - \tau_0^m$.
    We denote the relative deformations induced by changing $s$ on covariate channel $\tau^m$ as $\phi_s^m = \exp(sV_m)$, and similarly for $\phi_t^n = \exp(tV_n)$. 
    Consequently, we also have their commutativity: 
    \begin{equation} \label{eq_commute}
        \phi_s^m \circ \phi_t^n = \phi_t^n \circ \phi_s^m. 
    \end{equation}
    
    The Lie bracket measures the infinitesimal failure of the flows of two vector fields to commute \citep{jacobson2013lie}. 
    Therefore, we evaluate, on a point $\boldsymbol{w} \in \mathcal{W}$, the commutator of the flows along a closed loop $(0,0) \rightarrow (s, 0) \rightarrow (s, t) \rightarrow (0, t) \rightarrow (0, 0)$, with changes $s, t \rightarrow 0$. 
    By definition, going along this closed loop induces the following equation: 
    $$
        \phi_{-t}^n \circ \phi_{-s}^m \circ \phi_t^n \circ \phi_s^m (\boldsymbol{w}) = \boldsymbol{w} + st [V_m, V_n](\boldsymbol{w}) + \mathcal{O}(s^2t + st^2). 
    $$
    Because of Eq.~(\ref{eq_commute}), the left-hand side of the equation above directly reduces to the identity mapping on $\boldsymbol{w}$ through changing the order of the terms. 
    That is, the equation reduces to:
    $$
        st [V_m, V_n](\boldsymbol{w}) + \mathcal{O}(s^2t + st^2) = 0. 
    $$
    As $s,t \neq 0$, and the equation should apply to all locations $\boldsymbol{w} \in \mathcal{W}$, this enforces $[V_m, V_n] = 0$ for any two distinct covariate channels $m \neq n$. 

    \textbf{Part 2: Sufficiency}

    Conversely, if $[V_m, V_n] = 0$, the fundamental theorem of Lie derivatives guarantees that their corresponding lows commute for all integrations \citep{jacobson2013lie}. 
    That is, the relation in Eq.~(\ref{eq_commute}) holds for all finite shifts $s$ and $t$, thereby satisfying Assumption~\ref{assumption_path_independence}. 
\end{proof}

\section{Proof of Proposition~\ref{proposition_linear}} \label{appendix_2}

\begin{proof}
    By definition, the total relative spatial deformation induced by $\boldsymbol{\tau}_k$, the simultaneous shift of $p$ covariate channels from $\boldsymbol{\tau}_0$, is generated by Eq.~(\ref{eq_composition}). 
    Specifically, for any location $\boldsymbol{w} \in \mathcal{W}$, this is given by: 
    $$
        h_k(\boldsymbol{w}) = \exp \left( \sum_{m = 1}^p \Delta \tau_k^m V_m \right) (\boldsymbol{w}). 
    $$

    To evaluate the exponential of a sum, we must rely on the BCH formula stated in Eq.~(\ref{eq_bch}) \citep{jacobson2013lie}. 
    According to Lemma~\ref{lemma_commute}, the DoE path independence grants us the guarantees that the base velocity fields commute: $[V_m, V_n]=0$ for all $m \neq n$. 
    Because the Lie bracket operation is bilinear, the scaled vector fields also commute: 
    $$
        [\Delta \tau_k^m V_m, \Delta \tau_k^n V_n] = \Delta \tau_k^m \Delta \tau_k^n [V_m, V_n] = 0. 
    $$

    All first-order Lie brackets vanish as above, and therefore, every subsequent higher-order nested Lie brackets in Eq.~(\ref{eq_bch}) also vanish. 
    By induction, the exact factorization extends to the sum of all $p$ channels, and consequently, the infinite series truncates to: 
    $$
    \begin{aligned}
        \exp \left( \sum_{m = 1}^p \Delta \tau_k^m V_m \right) (\boldsymbol{w}) & = \left[ \exp(\Delta \tau_k^p V_p) \circ \cdots \circ \exp(\Delta \tau_k^1 V_1) \right] (\boldsymbol{w}) \\
        & = \left[ \bigcirc_{m = 1}^p \exp (\Delta \tau_k^m V_m) \right] (\boldsymbol{w}). 
    \end{aligned}
    $$
    Substituting the isolated spatial deformations defined in Eq.~(\ref{eq_linear}) yields: 
    $$
        h_k(\boldsymbol{w}) = \left( h_k^p \circ \cdots \circ h_k^1 \right) (\boldsymbol{w}). 
    $$
    
    Finally, the total covariate-driven spatial deformation $f_k$ acting on the original spatial domain $\mathcal{S}$ is clearly defined as the application of this total relative deformation to the baseline $f_0$. 
    Specifically, for any location $\boldsymbol{s} \in \mathcal{S}$: 
    $$
    \begin{aligned}
        f_k(\boldsymbol{s}) & = h_k \left( f_0(\boldsymbol{s}) \right) \\
        & = \left( h_k^p \circ \cdots \circ h_k^1 \circ f_0 \right)(\boldsymbol{s}). 
    \end{aligned}
    $$
    
    This concludes the proof, demonstrating that the complex deformation driven by multiple covariates can be exactly computed via sequential isolated deformations.
\end{proof}

\section{Proof of Proposition~\ref{proposition_glm}} \label{appendix_3}

\begin{proof}
    The only distinction to the derivation of Proposition~\ref{proposition_linear} is how the properties of the linking functions $g_m(\cdot)$ are related to the constraints of deformations $f_k(\cdot)$ we gave in Sec.\ref{sec_setup}. 
    Specifically, we need $f_k(\cdot)$ to be orientation-preserving diffeomorphisms. 
    
    Because the composition of diffeomorphisms is surely diffeomorphic, the proof of composite validity reduces to the proof of the relative deformation $h_k^m (\cdot) = \exp \left( g_m(\Delta \tau_k^m) V_m \right)(\cdot)$ induced by any isolated channel. 
    By the inverse function theorem \citep{hamilton1979inverse}, $h_k^m (\cdot)$ being an orientation-preserving diffeomorphism requires the Jacobian determinant to be strictly positive everywhere: $\det \left( \nabla h_k^m(\boldsymbol{w}) \right), \forall \boldsymbol{w} \in \mathcal{W}$. 

    Applying the chain rule on Eq.~(\ref{eq_glm}), the Jacobian of this mapping is (note that covariates can be spatial-varying as stated in Sec .~\ref{sec_setup}): 
    \begin{align}
        \begin{split}
            \label{eq_gradient}
            \nabla h_k^m(\boldsymbol{w}) & = \nabla \left[ \exp \left( g_m(\Delta \tau_k^m(\boldsymbol{w})) V_m \right) (\boldsymbol{w}) \right] \\
            & = \nabla \exp (tV_m) \vert_{t = g_m(\Delta \tau_k^m(\boldsymbol{w}))} + V_m(h_k^m(\boldsymbol{w})) \otimes \left[ \nabla g_m(\Delta \tau_k^m(\boldsymbol{w})) \right]^\top,
        \end{split}
    \end{align}
    in which $V_m(h_k^m(\boldsymbol{w}))$ is the velocity vector at a deformed location $h_k^m(\boldsymbol{w})$, and
    $$
        \nabla g_m(\Delta \tau_k^m(\boldsymbol{w})) = g_m' (\Delta \tau_k^m(\boldsymbol{w})) \nabla \left( \Delta \tau_k^m(\boldsymbol{w}) \right). 
    $$

    The deformation $h_k^m (\cdot) = \exp \left( g_m(\Delta \tau_k^m) V_m \right)(\cdot)$ can be viewed as the composition of two parts: a flow $\exp (tV_m)$ and a spatial-varying field $g_m(\Delta \tau(\cdot))$. 
    Correspondingly, the first term of Eq.~(\ref{eq_gradient}) is the standard Jacobian of the mapping as if $t$ is a constant and takes the value $t = g_m(\Delta \tau_k^m(\boldsymbol{w}))$. 
    By the Picard–Lindel\"of theorem \citep{hartman2002ordinary} and what directly follows \citep{lee2003smooth}, the flow of any smooth velocity field generates a one-parameter group of diffeomorphisms. 
    That is, for any constant value of $t$, $\exp (tV_m)$ is bijective (which also implicitly supports the derivation of Proposition~\ref{proposition_linear}), thereby:
    $$
        \det \left[ \nabla \exp (tV_m) \right] > 0.
    $$
    Consequently, the first term of Eq.~(\ref{eq_gradient}) does not cause any danger of violating the constraints.
    
    Conversely, the threat comes from the second term of Eq.~(\ref{eq_gradient}). 
    The analysis of the second-order could be complex, but fortunately, we can simplify Eq.~(\ref{eq_gradient}) with the matrix determinant lemma $\det \left( A + \boldsymbol{u} \boldsymbol{v}^\top \right) = \det A \left( 1 + \boldsymbol{v}^\top A^{-1} \boldsymbol{u} \right)$ \citep{petersen2008matrix}: 
    \begin{equation}
        \label{eq_determinant}
        \nabla h_k^m(\boldsymbol{w}) = \det \left( A \right) \cdot \left( 1 + \left[ \nabla g_m(\Delta \tau_k^m(\boldsymbol{w})) \right]^\top A^{-1} V_m(h_k^m(\boldsymbol{w})) \right),
    \end{equation}
    where $A = \nabla (tV_m) \vert_{t = g_m(\Delta \tau_k^m(\boldsymbol{w}))}$ for simplicity. 

    Notably, if we view $h_k^m = \exp(g_m(\Delta \tau_k^m) V_m)$ as a static flow driven by the velocity field $V_m$, then $A$ is exactly its Jacobian. 
    By the very fundamental fact that a velocity field is invariant under its own pushforward \citep{lee2003smooth}, mathematically we have: 
    $$
    \begin{aligned}
        A V_m(\boldsymbol{w}) & = V_m(h_k^m(\boldsymbol{w})) \\
        V_m(\boldsymbol{w}) & = A^{-1} V_m(h_k^m(\boldsymbol{w})). 
    \end{aligned}
    $$

    Substitute this into Eq.~(\ref{eq_determinant}): 
    $$
        \nabla h_k^m(\boldsymbol{w}) = \det \left( A \right) \cdot \left( 1 + \left[ \nabla g_m(\Delta \tau_k^m(\boldsymbol{w})) \right]^\top V_m(\boldsymbol{w}) \right). 
    $$

    Because it has been shown that $\det(A)>0$, the entire Jacobian determinant is positive if and only if:
    \begin{equation}
        \label{eq_condition}
        1 + \left[ \nabla g_m(\Delta \tau_k^m(\boldsymbol{w})) \right]^\top V_m(\boldsymbol{w}) > 0. 
    \end{equation}

    As $\left[ \nabla g_m(\Delta \tau_k^m(\boldsymbol{w})) \right]^\top V_m(\boldsymbol{w}) = \langle \nabla g_m(\Delta \tau_k^m(\boldsymbol{w})), V_m(\boldsymbol{w}) \rangle$, it is by definition the directional derivative of the scalar function field $g_m(\Delta \tau_k^m(\cdot))$ in the direction of the vector field $V_m(\cdot)$ at location $\boldsymbol{w}$.
    We consider the integral curve $\gamma(x)$, which is the solution of an ODE:
    $$
        \frac{d \gamma(x)}{dx} = V_m(\gamma(x)). 
    $$
    Based on the parameterization with this integral curve, the directional derivative can be alternatively written as: 
    $$
        \frac{d}{dx} g_m(\Delta \tau_k^m(\gamma(x))),
    $$
    on which we apply the chain rule: 
    $$
    \begin{aligned}
        \frac{d}{dx} g_m(\Delta \tau_k^m(\gamma(x))) & = \nabla g_m(\Delta \tau_k^m(\gamma(x))) \cdot \frac{d \gamma(x)}{dx} \\
        & = \nabla g_m(\Delta \tau_k^m(\gamma(x))) \cdot V_m(\gamma(x)).
    \end{aligned}
    $$
    The last step is because we have defined $\gamma(s)$ to be the solution of the ODE parameterized by $V_m$ as above. 

    Expanding the gradient using the chain rule yields: 
    $$
        \nabla g_m(\Delta \tau_k^m(\gamma(x))) = {g_k^m}^\prime (\Delta \tau_k^m(\gamma(x))) \nabla \left( \Delta \tau_k^m(\gamma(x)) \right).
    $$
    Substituting this expanded form of gradient back into Eq.~(\ref{eq_condition}): 
    $$
        1 + g_m^\prime(\Delta \tau_k^m(\gamma(x))) \left[ \nabla \Delta \tau_k^m(\gamma(x)) \cdot V_m(\gamma(x)) \right] > 0.
    $$

    Let the integral curve $\gamma(s)$ be a trajectory that passes through $\boldsymbol{w}$, and this inequality explicitly reveals the mechanism of potential folding. 
    By Cauchy-Schwarz inequality, and our $ L_m$-Lipschitz condition: 
    $$
    \begin{aligned}
        \vert \nabla \Delta \tau_k^m(\gamma(x)) \cdot V_m(\gamma(x)) \vert & \leq \Vert \nabla \Delta \tau_k^m(\gamma(x)) \Vert_2  \cdot \Vert V_m(\gamma(x)) \Vert_2 \\
        & \leq \Vert \nabla \Delta \tau_k^m(\gamma(x)) \Vert_2 \cdot 1 \\
        & \leq L_m. 
    \end{aligned}
    $$
    Because we have $\vert g_m' \vert \leq \frac{1}{L_M}$, these explicitly lead to: 
    \begin{gather*}
        -L_M \leq \nabla \Delta \tau_k^m(\gamma(x)) \cdot V_m(\gamma(x)) \leq L_M \\
        -1 \leq g_m^\prime(\Delta \tau_k^m(\gamma(x))) \left[ \nabla \Delta \tau_k^m(\gamma(x)) \cdot V_m(\gamma(x)) \right] \leq 1 \\
        0 \leq 1 + \leq g_m^\prime(\Delta \tau_k^m(\gamma(x))) \left[ \nabla \Delta \tau_k^m(\gamma(x)) \cdot V_m(\gamma(x)) \right] \leq 2,
    \end{gather*}
    which directly proves Eq.~(\ref{eq_condition}). 

    The result holds for all spatial locations $\boldsymbol{w} \in \mathcal{W}$, and their unique one-dimensional trajectory $\gamma(x)$ in the space. 
    Essentially, this traverses all particles in the domain and carefully checks that their paths are free of folding \citep{lee2003smooth}. 
    This concludes the proof. 
\end{proof}

\end{document}